\newcommand{\Z}{\mathbb{Z}}
\newtheorem{definition}{Definition}
\newtheorem{theorem}{Theorem}
\newtheorem{prop}{Proposition}
\theoremstyle{remark}
\newtheorem*{remark*}{Remark}
\title{Accelerated Evaluation of Ollivier-Ricci Curvature Lower Bounds: Bridging Theory and Computation}
\author{
  Wonwoo Kang\thanks{Equal contributions, alphabetical order.}\\
  Department of Mathematics\\
  University of Illinois, Urbana-Champaign\\
  \texttt{wonwook2@illinois.edu} \\
  \And
  Heehyun Park\footnotemark[1]\\
  Department of Computer Science and Engineering\\
  Pennsylvania State University\\
  \texttt{hbp5148@psu.edu} \\
  % \AND
  % Coauthor \\
  % Affiliation \\
  % Address \\
  % \texttt{email} \\
  % \And
  % Coauthor \\
  % Affiliation \\
  % Address \\
  % \texttt{email} \\
  % \And
  % Coauthor \\
  % Affiliation \\
  % Address \\
  % \texttt{email} \\
}
\begin{document}

\maketitle

\begin{abstract}
Curvature serves as a potent and descriptive invariant, with its efficacy validated both theoretically and practically within graph theory. We employ a definition of generalized Ricci curvature proposed by Ollivier, which Lin and Yau later adapted to graph theory, known as Ollivier-Ricci curvature (ORC). ORC measures curvature using the Wasserstein distance, thereby integrating geometric concepts with probability theory and optimal transport. Jost and Liu previously discussed the lower bound of ORC by showing the upper bound of the Wasserstein distance. We extend the applicability of these bounds to discrete spaces with metrics on integers, specifically hypergraphs. Compared to prior work on ORC in hypergraphs by Coupette, Dalleiger, and Rieck, which faced computational challenges, our method introduces a simplified approach with linear computational complexity, making it particularly suitable for analyzing large-scale networks. Through extensive simulations and application to synthetic and real-world datasets, we demonstrate the significant improvements our method offers in evaluating ORC.
\end{abstract}

\section{Introduction}
Curvature serves as a fundamental and illuminating invariant, extensively validated both theoretically and empirically within the realm of graphs. In particular, the Ollivier-Ricci curvature (ORC) \cite{Ollivier2007, Ollivier2009} stands out as a key measure, offering insights into disparities observed in random walks through the perspective of Wasserstein distance. This geometric concept is deeply connected with principles from probability theory and optimal transport, enhancing our comprehension of complex network structures. It has also shown its usefulness in assessing disparities between real-world networks \cite{Samal2018} and in finding bottlenecks in real-world networks \cite{Arnaudon2021}.

The concept of the lower bound of ORC is crucial for practical computations, as it provides a simplified yet effective way to estimate curvature without the need for exhaustive calculations. This bound allows for a more efficient analysis of network structures, particularly in large-scale applications where computational resources are limited.

While the lower bounds of ORC have been explored in previous works by Jost and Liu \cite{Jost2014}, our research takes this foundational result further by generalizing it to broader contexts, including spaces with metrics on integers with elements of laziness incorporated into their structure. This extension broadens our theoretical framework, enabling it to accommodate a wider variety of computational scenarios and represents our primary theoretical contribution.

In practical terms, a significant application of our theoretical development is in the analysis of hypergraphs. Hypergraphs, which allow any number of nodes to participate in an edge, provide a more accurate representation of complex relationships. Examples of such relationships include co-authorships in scientific publications \cite{Lung2018, Wang2020}, intricate interactions among chemicals \cite{Burgio2020}, and group dynamics in social conversations \cite{kim2024}. These scenarios, typically constrained by traditional graph models, can now be effectively modeled and analyzed through our extended approach to curvatures in hypergraphs.

Recent efforts have begun to explore the application of curvature to hypergraphs, recognizing their potential to capture multi-dimensional interactions \cite{Asoodeh2018, Coupette2023, eidi2020ollivier, Leal2020}. However, computing curvature in such settings is time-consuming, especially due to the numerical methods required to calculate Wasserstein distance. These methods often face scalability and computational efficiency issues when dealing with large and complex datasets, presenting significant barriers to their broad application in hypergraph analysis.

In response to the computational challenges presented by current curvature-driven techniques, our paper proposes a simplified approach with the extension of the upper bound of Wasserstein distance, which implies the lower bound of ORC, to hypergraphs. Extending the lower bound of ORC to hypergraphs presents an exciting opportunity to deepen our understanding of curvature in complex network structures with linear computational complexity. This method addresses the need for efficient algorithms capable of analyzing large-scale networks where traditional methods may be computationally prohibitive. By leveraging insights from ORC and adapting them to hypergraphs, our approach offers a practical solution for curvature analysis in real-world network applications.

Overall, our paper aims to contribute to the ongoing discourse on Ollivier-Ricci curvature by extending the lower bound of ORC to broader settings and presenting a computationally efficient approach. Through theoretical analysis and empirical validation, we demonstrate the effectiveness and applicability of our method in understanding complex network structures.

\section{Backgrounds}
\subsection{Graph Theory}

A graph $G = (V, E)$ consists of a set of vertices $V$ and a set of edges $E$. Each edge $e \in E$ represents a pair of vertices, signifying a connection between them. In this literature, we concentrate on undirected graphs, where edges are unordered pairs. Two vertices $u$ and $v$ in $V$ are adjacent (denoted $u \sim v$) if there is an edge $e = \{u, v\}$ connecting them directly. An edge $e$ is also said to be adjacent to a vertex $v$ if it connects $v$ with another vertex $u$. Each vertex $v$ in $V$ is connected to a finite number of edges, characterizing the graph as locally finite. A simple graph permits no more than one edge between any pair of vertices, and does not allow any edge to connect a vertex to itself. A multigraph may include multiple edges connecting the same pair of vertices.

A simple hypergraph $H = (V, E)$ extends simple graphs, where each hyperedge $e \in E$ can connect any subset of vertices, up to $|V|$. A multihypergraph allows identical hyperedges to occur multiple times. The degree $d_v$ of a vertex $v$ is the number of edges connected to it, expressed as $d_v = |\{e \in E \mid v \in e\}|$. The neighborhood $\mathcal{N}(v)$ comprises all vertices adjacent to $v$. In simple graphs, $d_v = |\mathcal{N}(v)|$, whereas in multigraphs, $d_v \geq |\mathcal{N}(v)|$. Two vertices $u$ and $v$ are connected if there exists a sequence of adjacent vertices from $u$ to $v$, known as a path. The distance $d(u, v)$ counts the number of edges in the path with the shortest cardinality.

In a weighted graph, each edge is assigned a numerical value or weight. The weight between two adjacent vertices $u$ and $v$ is denoted by $w_{uv}$, equal to $w_{vu}$ in undirected graphs. For unweighted scenarios, this is typically set to $w_{uv} = 1$. The weighted degree of a vertex $v$ is defined as $d_v = \sum_{u \sim v} w_{vu}$, representing the sum of the weights of the edges connected to $v$.

\subsection{Metric Spaces on Integers} \label{section:intmetrics}
In this section, we only discuss a discrete space $X$. Metric spaces on integers are metric spaces $(X, d)$ where the distance between any two points is an integer. These spaces are essential in discrete mathematics and theoretical computer science.

A key example is the graph distance in graphs, where distances between vertices, measured as the shortest path via edges, are integers. This property is central to graph theory and its applications. Another example is the \(l^1\) norm on the integer lattice \(\mathbb{Z}^n\), where the distance between any two points is the sum of the absolute differences of their coordinates:
\[
d(x, y) = \sum_{i=1}^n |x_i - y_i|,
\]
ensuring integer distances. Additionally, the Hamming distance in coding theory measures the number of differing positions between two equal-length binary strings, providing a metric on integers critical for error detection and correction.

\subsection{Ollivier's Ricci Curvature}
Ricci curvature serves as a foundational concept within Riemannian geometry, describing how a curve or a surface diverges from being straight or flat \cite{Ricci1903}. Within this mathematical framework, curvatures like Ricci curvature offer valuable insights into the distinct geometric properties of various spaces. Naturally, there arose an inclination to seek analogous principles applicable to metric spaces beyond Riemannian manifolds \cite{Jost2014}.

In the context of graph theory, such generalized curvatures are relatively straightforward to define and assess \cite{Lin2011, topping2021understanding}. Moreover, they illuminate various quantities introduced in network analysis, offering a clearer conceptual foundation compared to other measures \cite{ni2015ricci, sandhu2015graph}. Among these generalized curvatures, Forman \cite{Forman2003} introduced the Ricci curvature for simplicial complexes, which stands out as the simplest.

Ollivier \cite{Ollivier2007} also extended the concept of Ricci curvature from smooth manifold theory to the realm of general metric spaces. This work has established another way to analyze the curvature properties in spaces that are different from Riemannian manifolds. At the heart of Ollivier's Ricci curvature (ORC) lies the idea of contrasting the cost of transporting probability measures within a metric space. This concept borrows heavily from optimal transport theory.

The Wasserstein distance is a fundamental metric in the field of optimal transport theory, providing a measure of the dissimilarity between two probability distributions \cite{Evans1999, Villani2003, Villani2009}. This distance quantifies the minimal \textit{work} needed to transform one distribution into another. Specifically, for probability measures $\mu$ and $\nu$ on a metric space $(X, d)$, the Wasserstein distance $W_1(\mu, \nu)$ is formally defined as follows:
\begin{align} \label{eq:wcalc}
W_1(\mu, \nu) = \inf\limits_{\gamma \in \Gamma(\mu, \nu)} \int_{X \times X} d(x, y) d\gamma(x, y).
\end{align}
Here, $\Gamma(\mu, \nu)$ represents the set of all joint probability measures on $X \times X$ that have marginals $\mu$ and $\nu$, respectively. Thus, for all subsets $A, B$ of $X$, the measure $\gamma$ satisfies $\gamma(A \times X) = \mu(A)$ and $\gamma(X \times B) = \nu(B)$. The measure $\gamma$ essentially encodes an \textit{optimal transport plan,} detailing how mass is transferred from one distribution to the other.

In standard Riemannian geometry, Ricci curvature provides insight into how the volume of infinitesimal geodesic balls changes when transported along parallel paths. Ollivier's approach shifts the focus from volume displacement to the dynamics of probability distributions, quantifying curvature by measuring the evolution of distances between distributions as they are transported along a geodesic. This provides a comprehensive understanding of geometric structures in spaces where traditional methods may not apply.

\begin{definition}[Ollivier's Ricci curvature]
In a general metric space $(X, d)$ equipped with a local probability measure $\mu_x(\cdot)$ at each point $x \in X$, the structure is denoted by $(X, d, \mu)$. The \emph{Ollivier Ricci curvature} $\kappa(x, y)$ between any two points $x, y \in X$ is defined by the formula:
\begin{equation}\label{eq:orc}
\kappa(x, y) = 1 - \frac{W_1(\mu_x, \mu_y)}{d(x, y)}
\end{equation}
Here, $\mu_x$ and $\mu_y$ are the local probability measures at points $x$ and $y$, respectively, which represent the distribution of mass around these points within the space.
\end{definition}

Also, for an edge $e = (x, y)$, we will use $\kappa(e)$ as a shorthand notation. In graph theory, the application of the Wasserstein distance aids in comprehending the overall structure of a graph via its local curvature properties. In the graph setting, a probability measure $\mu_x$ is assigned to each vertex $x \in V$, defined as follows:
\begin{equation} \label{eq:measure_graphs}
    \mu_x(y) = \begin{cases}
  \frac{w_{xy}}{d_x}  & \text{if node } y \text{ is connected to node } x\\
  0 & \text{otherwise}
\end{cases}
\end{equation}
Here, the weighted degree $d_x$ is $d_x = \sum_{y \sim x} w_{xy}$, summing the weights of all edges connected to vertex $x$. In the case of unweighted graphs, the weighted degree $d_x$ becomes simply the number of edges connected to vertex $x$, which is the traditional degree of the vertex. Accordingly, the probability measure $\mu_x$ for an unweighted graph is defined as:
\begin{equation}
\mu_x(y) = \begin{cases}
  \frac{1}{d_x} & \text{if node } y \text{ is connected to node } x\\
  0 & \text{otherwise}
\end{cases}
\end{equation}
This adjustment reflects that each adjacent vertex has an equal probability of being chosen, simplifying the understanding and computation of probability distributions in the context of unweighted graphs. Consequently, the Wasserstein distance between the measures at vertices $x$ and $y$ is defined by
\begin{align} \label{eq:wcalc2}
W_1(\mu_x,\mu_y) = \inf_{\gamma} \sum_{x' \sim x} \sum_{y' \sim y} d(x', y') \gamma(x', y')
\end{align}
where $\gamma$ represents a transport plan between the distributions $\mu_x$ and $\mu_y$, minimizing the total distance required to transport mass from the neighborhood of $x$ to that of $y$. The sums run over all vertices $x'$ adjacent to $x$ and $y'$ adjacent to $y$, and $d(x', y')$ denotes the distance between vertices $x'$ and $y'$.

We can delve deeper into the analytical methods used to quantify the transportation of these measures across the graph. An essential tool for this analysis is \emph{Kantorovich duality}, which provides a powerful framework for deducing bounds on the Wasserstein distance, linking local interactions to global geometric properties of the graph \cite{Villani2003}.

\begin{prop}[Kantorovich Duality]\label{Prop:Kant}
    \[
    W_1(\mu_x,\mu_y) = \sup_{f, 1-\text{Lip}}\left[\sum_{z, z \sim x} f(z)\mu_x(z) - \sum_{z, z \sim y} f(z)\mu_y(z)\right]
    \]
    where the supremum is taken over all 1-Lipschitz continuous functions on $G = (V, E)$, i.e., functions $f: V \rightarrow \mathbb{R}$ such that 
    \[
    |f(x) - f(y)| \leq d(x, y)
    \]
    for any $x, y \in V$, $x \neq y$.
\end{prop}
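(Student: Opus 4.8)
The plan is to prove the identity by recognizing the primal Wasserstein problem as a finite-dimensional linear program and invoking strong linear-programming duality, then simplifying the resulting dual using the metric (triangle-inequality) structure of $d$. The key enabling observation is that $\mu_x$ and $\mu_y$ are supported on the finite neighborhoods $\mathcal{N}(x)$ and $\mathcal{N}(y)$, so the transport plan $\gamma$ in \eqref{eq:wcalc2} ranges over a finite-dimensional polytope; there are no measure-theoretic subtleties, and strong duality holds as soon as the program is feasible and bounded.

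First I would set up the primal explicitly: minimize $\sum_{x',y'} d(x',y')\,\gamma(x',y')$ over $\gamma(x',y')\ge 0$ subject to the marginal constraints $\sum_{y'}\gamma(x',y')=\mu_x(x')$ and $\sum_{x'}\gamma(x',y')=\mu_y(y')$. Feasibility is immediate from the product coupling $\gamma=\mu_x\otimes\mu_y$, and the objective is bounded below by $0$, so strong duality applies. Introducing dual variables $\phi(x')$ and $\psi(y')$ for the two families of marginal constraints, the dual program reads
\[
\sup_{\phi,\psi}\Big[\sum_{x'}\phi(x')\mu_x(x')+\sum_{y'}\psi(y')\mu_y(y')\Big],\qquad \phi(x')+\psi(y')\le d(x',y')\ \text{ for all } x',y'.
\]

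The remaining and most delicate step is to collapse the two free potentials $(\phi,\psi)$ into a single $1$-Lipschitz function $f$ with $\psi=-f$, which is exactly where the metric axioms enter. Given any feasible pair, I would replace $\phi$ by its $c$-transform $\tilde\phi(z)=\inf_{y'}[\,d(z,y')-\psi(y')\,]$: each map $z\mapsto d(z,y')-\psi(y')$ is $1$-Lipschitz by the triangle inequality, so the infimum $\tilde\phi$ is $1$-Lipschitz, satisfies $\tilde\phi\ge\phi$, and still obeys the constraint. Writing $f:=\tilde\phi$ and using its Lipschitz property, one checks that $\inf_{x'}[\,d(x',y')-f(x')\,]=-f(y')$, which forces $\psi(y')\le -f(y')$; hence the objective only increases when we pass to the admissible pair $(f,-f)$. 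This shows the dual value equals $\sup_{f\ 1\text{-Lip}}\big[\sum_{z}f(z)\mu_x(z)-\sum_{z}f(z)\mu_y(z)\big]$, and since every $1$-Lipschitz $f$ yields a feasible pair $(f,-f)$, the reverse inequality is immediate. The main obstacle is precisely this reduction: verifying that the $c$-transform both preserves admissibility and does not decrease the objective, which is the only place the triangle inequality of the integer metric $d$ is genuinely used and the step that must be handled with care.
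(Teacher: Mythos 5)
Your proposal is correct in substance, but the comparison point comes first: the paper does not prove this proposition at all. It is stated as background, attributed to Kantorovich duality with a citation to Villani, and no argument for it appears in the appendix (the appendix only \emph{uses} it, e.g.\ in the ``Additional proofs'' theorem). So your derivation is not an alternative route to the paper's proof --- it is a proof where the paper has none, and it is the standard one for the discrete setting: recognize the primal as a finite-dimensional linear program (valid here because local finiteness makes $\mathrm{supp}(\mu_x)$ and $\mathrm{supp}(\mu_y)$ finite, even when $V$ is infinite), invoke strong LP duality, and then use a $c$-transform to collapse the two dual potentials $(\phi,\psi)$ into a single $1$-Lipschitz function. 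This is exactly the right way to make the cited classical result self-contained in the paper's setting, and your identification of finite-dimensionality as the reason no measure-theoretic duality machinery is needed is the correct enabling observation.

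One step needs tightening. If your LP variables are indexed by $\mathrm{supp}(\mu_x)\times\mathrm{supp}(\mu_y)$, then dual feasibility of the pair $(f,\psi)$ only yields $\psi(y')\le d(x',y')-f(x')$ for $x'\in\mathrm{supp}(\mu_x)$; consequently the identity $\inf_{x'}\bigl[d(x',y')-f(x')\bigr]=-f(y')$, whose proof uses the diagonal term $x'=y'$, does not by itself ``force'' $\psi(y')\le-f(y')$, because the infimum restricted to $\mathrm{supp}(\mu_x)$ can strictly exceed $-f(y')$ when $y'\notin\mathrm{supp}(\mu_x)$. The repair is one line and needs no feasibility at all: since $f(z)=\tilde\phi(z)=\inf_{y''\in\mathrm{supp}(\mu_y)}\bigl[d(z,y'')-\psi(y'')\bigr]$ and $y'$ does lie in $\mathrm{supp}(\mu_y)$, taking $y''=y'$ gives $f(y')\le d(y',y')-\psi(y')=-\psi(y')$ directly. (Alternatively, set the LP over $S\times S$ with $S=\mathrm{supp}(\mu_x)\cup\mathrm{supp}(\mu_y)$, after which your argument reads verbatim.) With that fix, the chain $\sum\phi\,\mu_x+\sum\psi\,\mu_y\le\sum f\,\mu_x-\sum f\,\mu_y$ holds, the reverse inequality is immediate from feasibility of $(f,-f)$ as you note, and the proposition follows.
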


With Proposition~\ref{Prop:Kant}, by strategically selecting a suitable 1-Lipschitz function $f$, we can effectively find a lower bound for $W_1$ and, consequently, an upper bound for the curvature measure $\kappa$. This duality and its implications highlight the interconnectedness of local vertex measures and the broader structural characteristics they influence.
\section{Lower Bound of ORC} \label{section:LRC}

Accurate measurement of curvature demands significant computational cost. The Sinkhorn algorithm, which computes the Wasserstein distance, is time-intensive due to its dependence on the number of iterations and convergence radius. For applications such as machine learning and clustering, a rapid estimation of curvature using simple graph statistics, such as degrees, would be more efficient.

Not only in graphs, but also in the discrete space with metrics on integers mentioned in Section \ref{section:intmetrics}, curvature can be discussed. Let us assume that space $X$ is a discrete space and the metric $d$ is defined only on integers.

\subsection{Lower Bound of ORC on Spaces with Metrics on Integers}

In a space $(X, d)$ with metrics $d(\cdot, \cdot)$ on integers equipped with local probability measures $\mu_x(\cdot)$ for each point $x \in X$, these measures can be conceptualized as a distribution of transition probabilities in a random walk. Here, $\mu_x$ represents the probability of transitioning from point $x$ to other points within the space $X$.

Based on the results presented in \cite{Jost2014}, we can further generalize this approach. They defined the local measure $\mu_x$ for a given graph as expressed in \eqref{eq:measure_graphs}, but this can be extended to spaces with metrics on integers, such as integer lattices with $\ell^1$ norm and strings of equal length with the Hamming distance. Complete proofs of this extension and results related to graphs can be found in Appendix \ref{section:proof}.

Define $(x)_+$ as $x$ if $x \geq 0$ and $0$ otherwise. An important observation is that the curvature $\kappa(x,y)$ exhibits symmetry in terms of $x$ and $y$. To facilitate our discussion, we introduce two operations: 
\[
a \wedge b = \min\{a, b\}, \quad a \vee b = \max\{a, b\}.
\]
\begin{restatable}{theorem}{general}\label{thm:general}
    In a metric space $(X, d)$ with metrics $d(\cdot, \cdot)$ on $\Z$ equipped with local probability measures $\mu_x(\cdot)$ for each point $x \in X$, for any adjacent pair of elements $x, y \in X$,
\begin{align*}
    W_1(\mu_x, &\mu_y) \leq 1 + \left(\left(1-\mu_x(y)-\mu_y(x)-\sum_{z,z\sim x,z\sim y}\mu_x(z)\vee\mu_y(z)\right)_{+}\right.\\
    &\left.+\left(1-\mu_x(y)-\mu_y(x)-\sum_{z,z\sim x,z\sim y}\mu_x(z)\wedge\mu_y(z)\right)_{+}-\sum_{z,z\sim x,z\sim y}\mu_x(z)\wedge\mu_y(z)\right).
\end{align*}
Hence, we get
\begin{align*}
    \kappa(&x, y) \geq \left(-\left(1-\mu_x(y)-\mu_y(x)-\sum_{z,z\sim x,z\sim y}\mu_x(z)\vee\mu_y(z)\right)_{+}\right.\\
    &\left.-\left(1-\mu_x(y)-\mu_y(x)-\sum_{z,z\sim x,z\sim y}\mu_x(z)\wedge\mu_y(z)\right)_{+}+\sum_{z,z\sim x,z\sim y}\mu_x(z)\wedge\mu_y(z)\right).
\end{align*}
\end{restatable}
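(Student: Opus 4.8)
\emph{Proof strategy.} Since $x$ and $y$ are adjacent we have $d(x,y)=1$, so by \eqref{eq:orc} the curvature inequality is equivalent to the Wasserstein bound via $\kappa(x,y)=1-W_1(\mu_x,\mu_y)$; I will therefore prove the bound on $W_1$ and read off the curvature statement at the end. Because $W_1(\mu_x,\mu_y)$ is an \emph{infimum} over admissible transport plans $\gamma\in\Gamma(\mu_x,\mu_y)$, it suffices to construct a single admissible $\gamma$ and bound its cost $\sum_{x',y'}d(x',y')\,\gamma(x',y')$ from above. Throughout I use that $\mu_x$ is supported on $\{x\}$ together with the unit sphere $\{z:z\sim x\}$, and likewise for $\mu_y$, as in the nearest-neighbour measures of \eqref{eq:measure_graphs} and the integer-metric examples; this is what keeps all transport distances bounded.

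\emph{The coupling.} Write $S=\sum_{z\sim x,\,z\sim y}\mu_x(z)\wedge\mu_y(z)$. First, at each common neighbour $z$ I leave $\mu_x(z)\wedge\mu_y(z)$ units fixed in place; since $d(z,z)=0$ this transports mass $S$ at zero cost. The residual source mass $1-S$ then sits at $y$ (amount $\mu_x(y)$), at the common neighbours carrying a surplus $\mu_x(z)>\mu_y(z)$, and on $x$ and the neighbours of $x$ not shared with $y$; the residual target demand is distributed symmetrically. I clear this residual using $x$ and $y$ as hubs: the demand $\mu_y(x)$ at $x$ is served from the neighbourhood of $x$, and the surplus $\mu_x(y)$ at $y$ is poured into the neighbourhood of $y$, each such trip costing $1$. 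The triangle inequality bounds every remaining trip: a common neighbour lies at distance $1$ from both $x$ and $y$, so common-neighbour surplus and deficit are cleared at cost at most $2$, whereas the only genuinely expensive transfers---from a private neighbour $x'$ of $x$ to a private neighbour $y'$ of $y$---cost at most $d(x',x)+d(x,y)+d(y,y')=3$.

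\emph{Cost accounting.} Let $A$ and $B$ denote the first and second bracketed expressions in the statement, namely $A=1-\mu_x(y)-\mu_y(x)-\sum_{z}\mu_x(z)\vee\mu_y(z)$ and $B=1-\mu_x(y)-\mu_y(x)-\sum_{z}\mu_x(z)\wedge\mu_y(z)$. The mass forced onto the cost-$3$ route is at most $(A)_+$ (when $A\le 0$ the private-neighbour surplus is absorbed entirely at distance $\le 2$, and nothing travels distance $3$), while $B$ controls how much residual mass must move at all. A short computation, using that all residual masses sum to $1-S$ and that $\mu_x(y)+\mu_y(x)=1-S-B$, bounds the total cost by $1+(A)_++(B)_+-S$. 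Substituting $W_1(\mu_x,\mu_y)\le 1+(A)_++(B)_+-S$ into $\kappa(x,y)=1-W_1(\mu_x,\mu_y)$ yields the stated lower bound on $\kappa$.

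\emph{Main obstacle.} The conceptual content is the single coupling above, but the delicate part is the case analysis certifying that $\gamma$ stays admissible---non-negative with the correct marginals---across the sign regimes of $A$ and $B$ and the orderings of $\mu_x(y)$ against the private-neighbour mass of $y$ (and symmetrically). I expect the cleanest treatment to fix each flow as an explicit minimum or maximum of the relevant masses, so that admissibility holds identically and the cost evaluates to the claimed closed form; in the degenerate regimes where a positive part vanishes one must instead verify that the leftover mass can be rerouted entirely along trips of cost at most $1$, so that the clamped expression still dominates the true transport cost. Pushing this $\min/\max$ bookkeeping through cleanly, rather than through a proliferation of sub-cases, is where the real work lies.
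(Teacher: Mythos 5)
Your proposal is correct and takes essentially the same route as the paper's proof: the same explicit coupling (fixing $\mu_x(z)\wedge\mu_y(z)$ at common neighbours, using $x$ and $y$ as hubs, with cost tiers $1$, $2$, $3$ for own-neighbour, common-neighbour, and private-to-private transfers), the same quantities $A$ and $B$, and the same identification of the cost-$3$ mass with $(A)_+$. The ``main obstacle'' you flag is handled in the paper exactly as you anticipate, by a three-case analysis on the signs ($0\leq A\leq B$, $A<0\leq B$, $A\leq B<0$), in each case verifying that the realized cost equals the claimed clamped expression $1+(A)_++(B)_+-S$.
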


\subsection{Curvature with Laziness}

A lazy random walk is closely related to how it resists movement from its current state. A point $x$ in this space can be considered lazy if its associated measure $\mu_x$ is heavily concentrated around $x$ itself, indicating a high probability of remaining at the same point in successive transitions. This method combines ideas from the geometry of spaces and random processes. In the context of optimal transport, the laziness of a point is reflected by the Wasserstein distance, where a high distance to other measures indicates a reluctance to move away from the current state.

To quantify the laziness as discussed in the context of a metric space $(X, d)$ with local probability measures $\mu_x$, we can introduce a parameter $\alpha \in (0, 1)$. This parameter $\alpha$ will serve as a numerical representation of the degree of laziness at each point in the space. We then describe a lazy local measure $\mu_x^\alpha$ of $\mu_x$ as 
\[\mu_x^{\alpha} = \alpha \delta_x + (1-\alpha) \mu_x\] where $\delta_x$ is the Dirac measure concentrated at $x$. We can extend the results of Wasserstein and ORC in $\alpha$-lazy random walks on graphs from \cite{Lin2011} to discrete spaces $(X, d)$ with metrics on integers.

\begin{restatable}{theorem}{laziness}
\label{thm:alphacurv}
    Let $(X, d)$ be a space with metrics on $\Z$ with a local probability measure $\mu_x(\cdot)$ for each $x \in X$. For any adjacent $x, y \in X$, let $\mu_x^\alpha$ and $\mu_y^\alpha$ be the lazy versions of $\mu_x$ and $\mu_y$ for the lazy parameter $\alpha \in (0, 1)$, respectively.
    Then,
    \[W_1(\mu_x^{\alpha}, \mu_y^{\alpha}) \leq (1 - \alpha) W_1(\mu_x, \mu_y) + \alpha.\]
\end{restatable}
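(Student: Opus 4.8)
The plan is to exploit the joint convexity of the Wasserstein distance together with the fact that adjacency forces $d(x,y)=1$ in a metric space on $\Z$. The key structural observation is that the two lazy measures admit a \emph{parallel} splitting into a ``stay'' part and a ``move'' part with identical weights, namely $\mu_x^\alpha = \alpha\delta_x + (1-\alpha)\mu_x$ and $\mu_y^\alpha = \alpha\delta_y + (1-\alpha)\mu_y$. This is exactly the situation in which $W_1$ behaves sub-additively, so I would reduce the bound to transporting the Dirac parts and the original measures separately.

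Concretely, I would argue through transport plans. Let $\gamma$ be any coupling of $\mu_x$ and $\mu_y$, and let $\delta_{(x,y)}$ denote the trivial coupling of $\delta_x$ and $\delta_y$ that places all its mass at the single point $(x,y)\in X\times X$. Form the convex combination $\tilde\gamma = \alpha\,\delta_{(x,y)} + (1-\alpha)\gamma$. The first step is to verify that $\tilde\gamma \in \Gamma(\mu_x^\alpha,\mu_y^\alpha)$: since marginalization is linear, the first marginal of $\tilde\gamma$ is $\alpha\delta_x + (1-\alpha)\mu_x = \mu_x^\alpha$ and the second is $\alpha\delta_y + (1-\alpha)\mu_y = \mu_y^\alpha$. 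Evaluating the cost in \eqref{eq:wcalc} against this admissible plan gives
\[
\int_{X\times X} d(u,v)\,d\tilde\gamma(u,v) = \alpha\,d(x,y) + (1-\alpha)\int_{X\times X} d(u,v)\,d\gamma(u,v).
\]
Because $\tilde\gamma$ is merely one admissible plan, the infimum defining $W_1(\mu_x^\alpha,\mu_y^\alpha)$ is bounded above by the right-hand side; taking the infimum over $\gamma$ then yields $W_1(\mu_x^\alpha,\mu_y^\alpha)\le \alpha\,d(x,y) + (1-\alpha)W_1(\mu_x,\mu_y)$. Finally, since $x$ and $y$ are adjacent in a metric space on $\Z$ we have $d(x,y)=1$, so $\alpha\,d(x,y)=\alpha$ and the claimed inequality follows.

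As an alternative I could avoid explicit couplings entirely by invoking Kantorovich duality (Proposition~\ref{Prop:Kant}). For any $1$-Lipschitz $f$, expanding the dual objective against $\mu_x^\alpha$ and $\mu_y^\alpha$ splits it as $\alpha\bigl(f(x)-f(y)\bigr) + (1-\alpha)\bigl(\sum_z f(z)\mu_x(z) - \sum_z f(z)\mu_y(z)\bigr)$; bounding the first term by $\alpha\,d(x,y)=\alpha$ through the Lipschitz condition and the second bracket by $W_1(\mu_x,\mu_y)$ through duality, then taking the supremum over $f$, reproduces the same inequality.

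The proof is short, and the only genuine verification is that the mixed plan $\tilde\gamma$ really is a coupling of the lazy measures, i.e.\ that the superposition contributes the mass $\alpha$ to the diagonal point $(x,y)$ while leaving the marginals linear and introducing no interaction terms. A secondary point of care is the \emph{order of operations}: since we only need an upper bound, the argument must use an arbitrary $\gamma$ and pass to the infimum at the end, rather than presuming optimality of $\gamma$ in advance.
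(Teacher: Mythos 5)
Your proof is correct and is essentially the paper's own argument: the paper constructs the same mixture coupling, merely phrased probabilistically (with probability $\alpha$ stay at $(x,y)$, with probability $1-\alpha$ follow an optimal plan $\gamma$), which is exactly your $\tilde\gamma = \alpha\,\delta_{(x,y)} + (1-\alpha)\gamma$. If anything, your version is slightly tidier, since you couple with an arbitrary $\gamma$ and pass to the infimum at the end rather than assuming an optimal plan is attained, and the dual argument you sketch is a valid (unused in the paper) alternative.
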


\begin{restatable}{cor}{lazinesscor}
\label{cor:alphakurv}
Let $(X, d)$ be a space with metrics on $\Z$ with a local probability measure $\mu_x(\cdot)$ for each $x \in X$. For any adjacent $x, y \in X$, let $\mu_x^\alpha$ and $\mu_y^\alpha$ be the lazy versions of $\mu_x$ and $\mu_y$ for the lazy parameter $\alpha \in (0, 1)$, respectively. Let $\kappa(x, y)$ be the Ricci curvature on $(X, d, \mu)$ and $\kappa^\alpha(x, y)$ be the Ricci curvature on $(X, d, \mu^\alpha)$ between $x$ and $y$, then 
\begin{equation}\label{eq:curv}
    \kappa^\alpha(x, y) \geq (1 - \alpha) \kappa(x, y).
\end{equation}
\end{restatable}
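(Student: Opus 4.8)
The plan is to derive the curvature inequality directly from the Wasserstein bound of Theorem~\ref{thm:alphacurv} together with the defining formula \eqref{eq:orc} for Ollivier-Ricci curvature. The essential first observation is that the two curvatures $\kappa(x,y)$ and $\kappa^\alpha(x,y)$ are computed on the same metric space $(X,d)$; only the measure family changes, from $\mu$ to $\mu^\alpha$. Consequently the denominator $d(x,y)$ in \eqref{eq:orc} is identical in both expressions. Since we assume $x$ and $y$ are adjacent, I would use $d(x,y)=1$, which collapses the two definitions to $\kappa(x,y) = 1 - W_1(\mu_x,\mu_y)$ and $\kappa^\alpha(x,y) = 1 - W_1(\mu_x^\alpha,\mu_y^\alpha)$.

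Next I would simply chain the inequality. Starting from the definition and inserting the bound $W_1(\mu_x^\alpha,\mu_y^\alpha) \leq (1-\alpha)W_1(\mu_x,\mu_y) + \alpha$ supplied by Theorem~\ref{thm:alphacurv}, I would write
\begin{align*}
    \kappa^\alpha(x,y) = 1 - W_1(\mu_x^\alpha,\mu_y^\alpha) &\geq 1 - \bigl((1-\alpha)W_1(\mu_x,\mu_y) + \alpha\bigr)\\
    &= (1-\alpha) - (1-\alpha)W_1(\mu_x,\mu_y)\\
    &= (1-\alpha)\bigl(1 - W_1(\mu_x,\mu_y)\bigr) = (1-\alpha)\kappa(x,y),
\end{align*}
where the final equality again uses $d(x,y)=1$ to recognize $1 - W_1(\mu_x,\mu_y)$ as $\kappa(x,y)$. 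Factoring out $(1-\alpha)$ is legitimate because $\alpha \in (0,1)$ guarantees $1-\alpha > 0$, so the inequality direction is preserved.

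Honestly, there is no genuine obstacle here: the statement is an immediate corollary once Theorem~\ref{thm:alphacurv} is in hand, and the only point requiring any care is confirming that the metric—and hence the normalizing distance $d(x,y)$—is shared by the lazy and non-lazy structures so that the bound on Wasserstein distances transfers cleanly to a bound on curvatures. Should one wish to state the corollary for non-adjacent $x,y$ as well, the same chain works verbatim after dividing the Wasserstein bound by $d(x,y)$ rather than specializing to $d(x,y)=1$, since the additive $\alpha$ term and the factor $(1-\alpha)$ both scale through the division in the same way.
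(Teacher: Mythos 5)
Your proof is correct and follows essentially the same route as the paper's: apply the definition \eqref{eq:orc} with $d(x,y)=1$, insert the Wasserstein bound from Theorem~\ref{thm:alphacurv}, and factor out $(1-\alpha)$. (Only your closing aside on non-adjacent pairs needs care: Theorem~\ref{thm:alphacurv} is stated for adjacent pairs, and for $d(x,y)>1$ its proof yields the additive term $\alpha\, d(x,y)$ rather than $\alpha$, so the extension is not quite ``verbatim.'')
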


\subsection{Lower bounds of ORC}

From the results of the two subsections above, we can ultimately generalize to local measures that incorporate elements of laziness in a space $(X, d)$. By combining the results of Theorem \ref{thm:general} and Theorem \ref{thm:alphacurv}, we can summarize as follows.

\begin{restatable}{theorem}{main}
    \label{thm:main}
    In a metric space $(X, d)$ with metric $d(\cdot, \cdot)$ on $\Z$ equipped with local probability measures $\mu_x(\cdot)$ for each point $x \in X$, for any adjacent pair of elements $x, y \in X$, without loss of generality assume $\mu_x(x) \leq \mu_y(y)$. $\mu_x(x)$ can be considered as laziness, and let $\nu_x$ and $\nu_y$ be the measures obtained by excluding laziness from $\mu_x$ and $\mu_y$ and renormalizing them. Then,
    \begin{align*}
    W_1&(\mu_x, \mu_y) \leq  \mu_x(x) + (1-\mu_x(x)) \left(1 + \left(1-\nu_x(y)-\nu_y(x)-\sum_{z,z\sim x,z\sim y}\nu_x(z)\vee\nu_y(z)\right)_{+}\right.\\
    &\left.+\left(1-\nu_x(y)-\nu_y(x)-\sum_{z,z\sim x,z\sim y}\nu_x(z)\wedge\nu_y(z)\right)_{+}-\sum_{z,z\sim x,z\sim y}\nu_x(z)\wedge\nu_y(z)\right).
    \end{align*}
    Hence, we get
    \begin{align*}
    \kappa(x,& y) \geq (1-\mu_x(x)) \left(- \left(1-\nu_x(y)-\nu_y(x)-\sum_{z,z\sim x,z\sim y}\nu_x(z)\vee\nu_y(z)\right)_{+}\right.\\
    &\left.-\left(1-\nu_x(y)-\nu_y(x)-\sum_{z,z\sim x,z\sim y}\nu_x(z)\wedge\nu_y(z)\right)_{+}+\sum_{z,z\sim x,z\sim y}\nu_x(z)\wedge\nu_y(z)\right).
    \end{align*}
\end{restatable}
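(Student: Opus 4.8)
The plan is to realize Theorem~\ref{thm:main} as a direct composition of the two results already established: the laziness estimate of Theorem~\ref{thm:alphacurv} and the integer-metric bound of Theorem~\ref{thm:general}. The guiding idea is that the mass $\mu_x(x)$ sitting at $x$ (together with a matching amount at $y$) plays exactly the role of the laziness parameter $\alpha$ in Theorem~\ref{thm:alphacurv}. So I would peel off this common laziness, apply the general bound to the remaining ``de-lazified'' measures $\nu_x,\nu_y$, and then reinsert the laziness through the contraction factor $(1-\alpha)$.

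First I would set $\alpha := \mu_x(x)$ and, using the symmetry of both $W_1$ and $\kappa$ in $x$ and $y$ to justify the hypothesis $\mu_x(x)\le\mu_y(y)$, define
\[
\nu_x := \frac{\mu_x - \alpha\,\delta_x}{1-\alpha}, \qquad \nu_y := \frac{\mu_y - \alpha\,\delta_y}{1-\alpha}.
\]
The crucial verification is that $\nu_x$ and $\nu_y$ are genuine probability measures. Nonnegativity of $\nu_x$ is immediate, since $\nu_x(x)=0$ and $\nu_x(z)=\mu_x(z)/(1-\alpha)\ge 0$ for $z\neq x$; nonnegativity of $\nu_y$ is exactly where the ordering $\mu_x(x)\le\mu_y(y)$ enters, as it forces $\nu_y(y)=(\mu_y(y)-\alpha)/(1-\alpha)\ge 0$. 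Choosing $\alpha$ any larger would push $\nu_y(y)$ negative, which is the structural reason the extractable laziness is the minimum $\mu_x(x)\wedge\mu_y(y)$. A one-line computation confirms each measure has total mass $1$, and by construction $\mu_x = \alpha\delta_x+(1-\alpha)\nu_x$ and $\mu_y=\alpha\delta_y+(1-\alpha)\nu_y$; that is, $\mu_x$ and $\mu_y$ are precisely the $\alpha$-lazy versions $\nu_x^\alpha$ and $\nu_y^\alpha$ of $\nu_x$ and $\nu_y$.

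With this identification the rest is a chain of inequalities. Applying Theorem~\ref{thm:alphacurv} to the base measures $\nu_x,\nu_y$ gives $W_1(\mu_x,\mu_y)=W_1(\nu_x^\alpha,\nu_y^\alpha)\le (1-\alpha)W_1(\nu_x,\nu_y)+\alpha$, and then Theorem~\ref{thm:general}, applied to the valid local probability measures $\nu_x,\nu_y$ on the integer metric space, bounds $W_1(\nu_x,\nu_y)$ by $1+B$, where $B$ denotes the bracketed expression of Theorem~\ref{thm:general} with every $\mu$ replaced by $\nu$. Substituting yields $W_1(\mu_x,\mu_y)\le \alpha+(1-\alpha)(1+B)$, which is the claimed Wasserstein bound once $\alpha=\mu_x(x)$ is reinstated. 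For the curvature statement I would use that $x$ and $y$ are adjacent, so $d(x,y)=1$ and $\kappa(x,y)=1-W_1(\mu_x,\mu_y)$; the algebra $1-\bigl[\alpha+(1-\alpha)(1+B)\bigr]=-(1-\alpha)B=(1-\alpha)(-B)$ collapses exactly to the stated lower bound.

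The proof carries no deep obstacle; the only genuine subtlety is the well-definedness of $\nu_y$, which hinges entirely on the minimality of the extracted laziness and hence on the WLOG ordering — a point I would state explicitly rather than bury. The remaining effort is purely bookkeeping: verifying that the neighborhood sums $\sum_{z,z\sim x,z\sim y}$ transform correctly under the common rescaling by $1/(1-\alpha)$ (they do, since rescaling alters neither the supports nor the adjacency pattern), and checking that the two invoked theorems are applied under hypotheses they genuinely satisfy.
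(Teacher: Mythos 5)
Your proposal is correct and follows essentially the same route as the paper's own proof: set $\alpha=\mu_x(x)$, de-lazify both measures into $\nu_x,\nu_y$, apply Theorem~\ref{thm:alphacurv} and then Theorem~\ref{thm:general}, and combine with $\kappa(x,y)=1-W_1(\mu_x,\mu_y)$. In fact, your explicit verification that $\nu_y$ remains a nonnegative measure (the point where the WLOG ordering $\mu_x(x)\le\mu_y(y)$ is actually used) is a detail the paper leaves implicit, so your write-up is, if anything, slightly more complete.
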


\section{Algorithm}

From Theorem \ref{thm:main}, we can develop an algorithm to evaluate the upper bound of the Wasserstein distance. Reviewing the formula once more, given two adjacent elements \( x \) and \( y \), along with their local probability measures \( \mu_x \) and \( \mu_y \), the goal is to find the upper bound of $W_1(\mu_x, \mu_y)$. From the inequality in the theorem, we can see that only a single iteration is needed. Therefore, the time complexity is \( O(n) \), where \( n \) is the number of vertices. See Algorithm \ref{alg:ricci_curvature}.

\begin{algorithm}
\scriptsize
\caption{Estimate Wasserstein Distance}
\label{alg:ricci_curvature}
\begin{algorithmic}[1]
\Procedure{Wasserstein}{$x, y, \mu_x, \mu_y$}
    \State $\alpha \gets \min(\mu_x(x), \mu_y(y))$
    \State $\nu_x \gets \frac{1}{1 - \alpha} (\mu_x - \alpha \delta_x)$
    \State $\nu_y \gets \frac{1}{1 - \alpha} (\mu_y - \alpha \delta_y)$

    \State maxSum $ \gets 0$
    \State minSum $ \gets 0$
    \ForAll{$v \in V$} 
        \State maxSum $ \gets $ maxSum $ + \max(\nu_x(v), \nu_y(v))$
        \State minSum $ \gets $ minSum $ + \min(\nu_x(v), \nu_y(v))$
    \EndFor
    \State $W \gets 1 - $minSum
    \If{$1- \nu_x(y) - \nu_y(x) - $maxSum$ > 0$}
        \State $W \gets W + (1- \nu_x(y) - \nu_y(x) - $maxSum$)$
    \EndIf
    
    \If{$1- \nu_x(y) - \nu_y(x) - $minSum$ > 0$}
        \State $W \gets W + (1- \nu_x(y) - \nu_y(x) - $minSum$)$
    \EndIf

    \State $W \gets \alpha + (1 - \alpha) W$
    \State \textbf{return} $W$
\EndProcedure
\end{algorithmic}
\end{algorithm}

\section{Application: Curvature in Hypergraphs} \label{section:curv_hyper}
In the previous section, we noted that we can establish a minimum estimate for ORC from a maximum estimate for the Wasserstein distance for any spaces with metrics on integers equipped with local probability measures. We will now apply Wasserstein distance estimation for specific measures to hypergraphs and assess their suitability for various scenarios.

In hypergraphs, an edge can have multiple cardinalities, so it should not be considered merely as a pair of nodes as in traditional graphs. Therefore, the method of calculating the lower bound of ORC as discussed by Jost and Liu \cite{Jost2014} cannot be intuitively applied. Recently, a new framework for defining curvature in hypergraphs was discussed by Coupette, Dalleiger, and Rieck \cite{Coupette2023}. 

In the given hypergraphs, we shall first define the local measures. Their work introduced three methods for defining measures with mathematical implications. Since our Wasserstein distance estimation is generalized for well-defined local measures, it can be applied to all these methods and potentially to other measures based on different mathematical intuitions.

They also defined a new function called the Aggregation (AGG) function for each edge, taking into account the concept of optimal transport where Wasserstein aggregates probabilities. The aggregate function is a function that replaces the traditional Wasserstein distance for edges in hypergraphs with a cardinality of 2 or more. There are three types of AGG functions defined in \cite{Coupette2023}, AGG$_{\text{A}}$, AGG$_{\text{B}}$, and AGG$_{\text{M}}$. The subscripts A, B, and M stand for average, barycenter, and maximum, respectively.

Now, based on the given or our defined AGG function, we can define the curvature for a specific edge as follows. 
\[\kappa(e) = 1 - \text{AGG}(e)\]
This is similar to the method used in graphs, where ORC was determined through the Wasserstein distance. In Appendix \ref{section:hypergraphs}, we will discuss local measures, AGG functions, and different curvatures with our new Wasserstein distance in greater detail.

The aggregate function and curvature ultimately require evaluating the Wasserstein distance. Here, instead of using traditional methods to compute the Wasserstein distance, we apply our approach. For details about the experiments, please refer to Section \ref{section:experiment}.

\section{Experiments} \label{section:experiment}
The experiments were conducted in two main parts. First, we compared the time required for our algorithm versus the numerical method called the Sinkhorn algorithm across various datasets. The second part involved comparing the curvatures obtained from the two algorithms for each dataset. For details on the datasets and the experimental setup, please refer to Appendix \ref{section:exp_detail}.

\subsection{Time Evaluation}\label{section:timeeval}

We performed a visual examination of the computational time required to compute the Wasserstein distance across multiple datasets (see Figure \ref{fig:time-eval}). The time data, measured in milliseconds, was plotted on a \textbf{logarithmic scale}. Experiments were conducted under two settings, with blue bars representing the time taken by the traditional method and orange bars representing the time taken by our method.

\begin{figure}[h]
  \centering
  \includegraphics[width=0.9\textwidth]{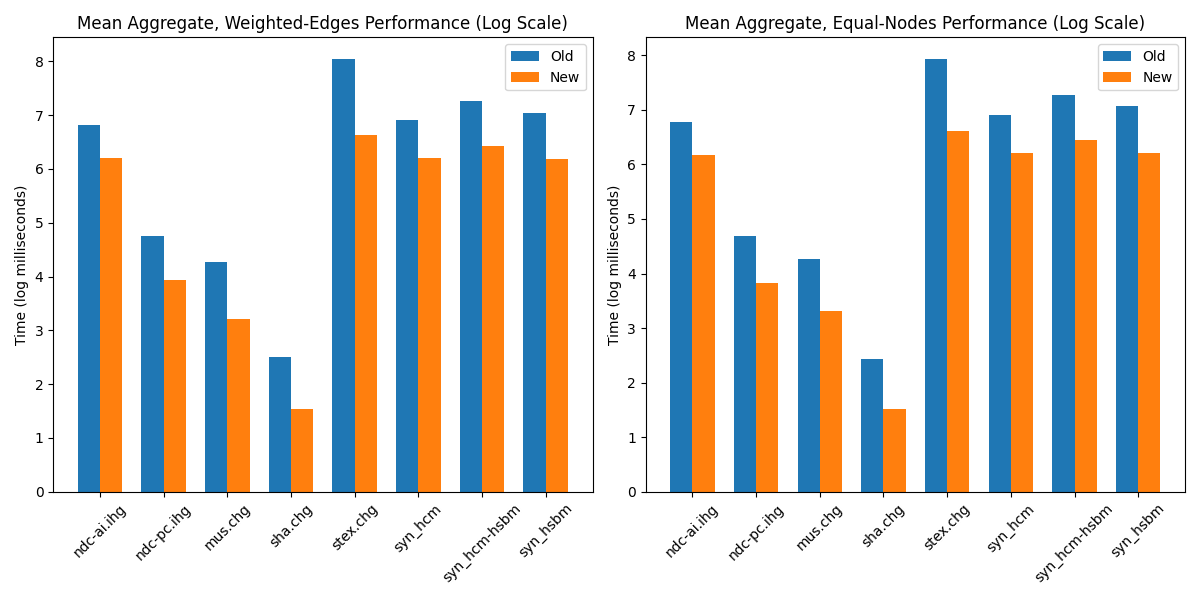}
  \caption{Logarithmic scale graph showing the time evaluation for computing the Wasserstein distance across different datasets.}
  \label{fig:time-eval}
\end{figure}

Our analysis revealed a noteworthy decrease in computational time compared to conventional algorithms. Typically, traditional methods iterate numerically until the results converge within a specified radius or until reaching a predetermined number of iterations. In contrast, our algorithm completes the computation in a single iteration, leading to significant time savings, as confirmed by our experimental findings. This efficiency gain is crucial, particularly in applications involving large-scale datasets where computational resources and time are critical factors.

\subsection{Curvature Evaluation}\label{section:curveval}

In Figure \ref{fig:multiple-graphs}, we present curvatures generated by both traditional and our algorithms across different datasets on scatter graphs. The x-axis shows curvatures from our algorithm, while the y-axis shows those from the traditional method. The red line represents the trend line, and the black dotted line indicates the ideal $y=x$ line, serving as a benchmark for comparison.

The accompanying histogram to the right provides further insight into the distribution of curvature values. Red bars represent outcomes from traditional methods, while blue bars denote our results. Despite a downward shift in the red bars compared to the blue bars, the overall shape of the distribution remains consistent.

Our approach calculates the lower bound of curvatures, resulting in a general downward shift compared to conventional methods. However, this shift does not significantly affect the overall distribution shape. In machine learning applications, such as community detection or clustering, determining a threshold curvature to define a community is crucial. The observed shift does not pose a significant issue, as the relative differences and distribution shapes are maintained.

These visual representations allow us to assess the impact of our methodology compared to conventional approaches. The data demonstrates that while our method produces lower values, it preserves the integrity of the data distribution.

\begin{figure}[p]
    \centering
    \begin{subfigure}[b]{0.45\textwidth}
        \centering
        \includegraphics[width=\textwidth]{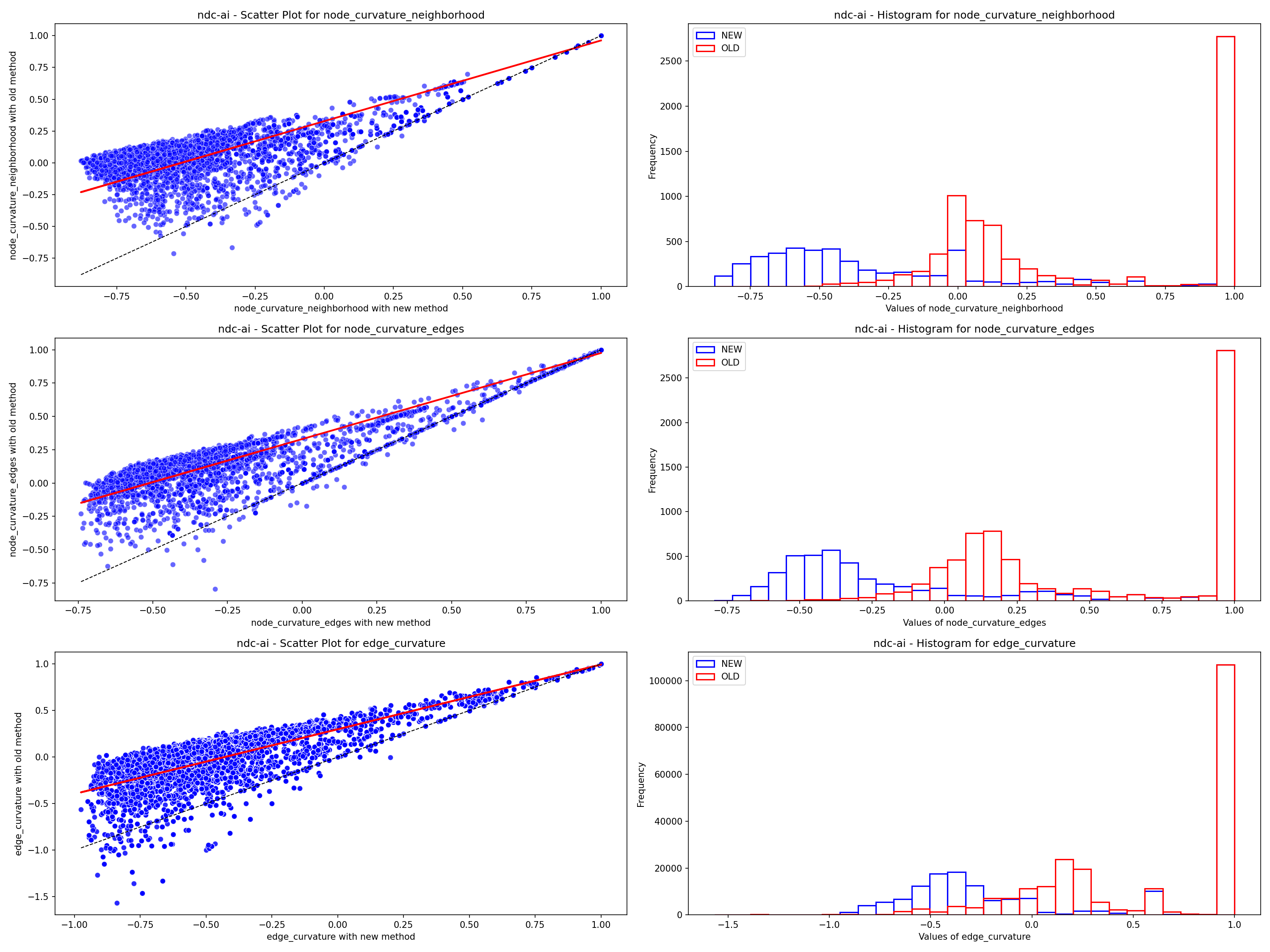}
        \caption{ndc-ai}
        \label{fig:graph1}
    \end{subfigure}
    \hfill % optional: add horizontal space between the figures
    \begin{subfigure}[b]{0.45\textwidth}
        \centering
        \includegraphics[width=\textwidth]{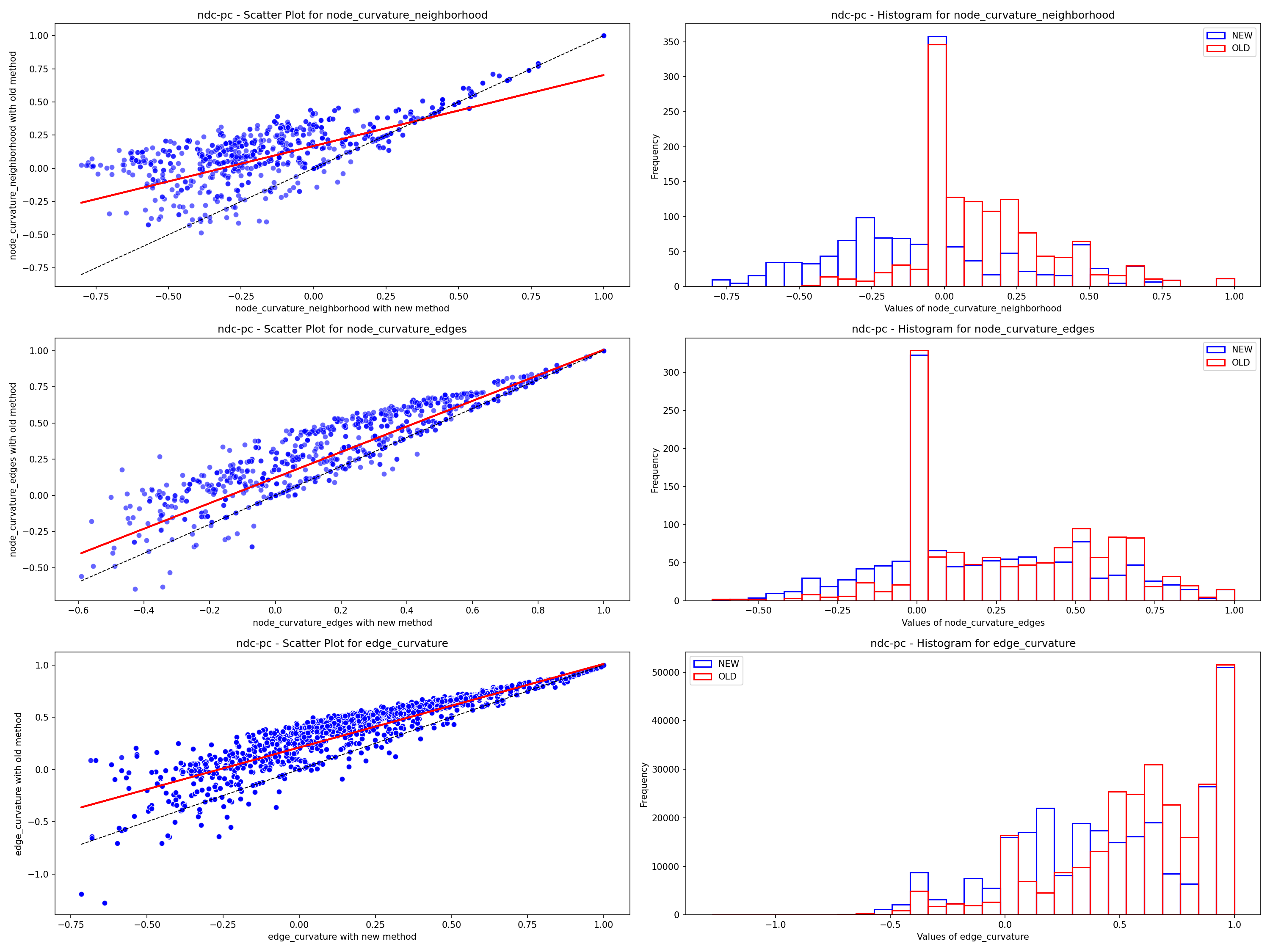}
        \caption{ndc-pc}
        \label{fig:graph2}
    \end{subfigure}

    \begin{subfigure}[b]{0.45\textwidth}
        \centering
        \includegraphics[width=\textwidth]{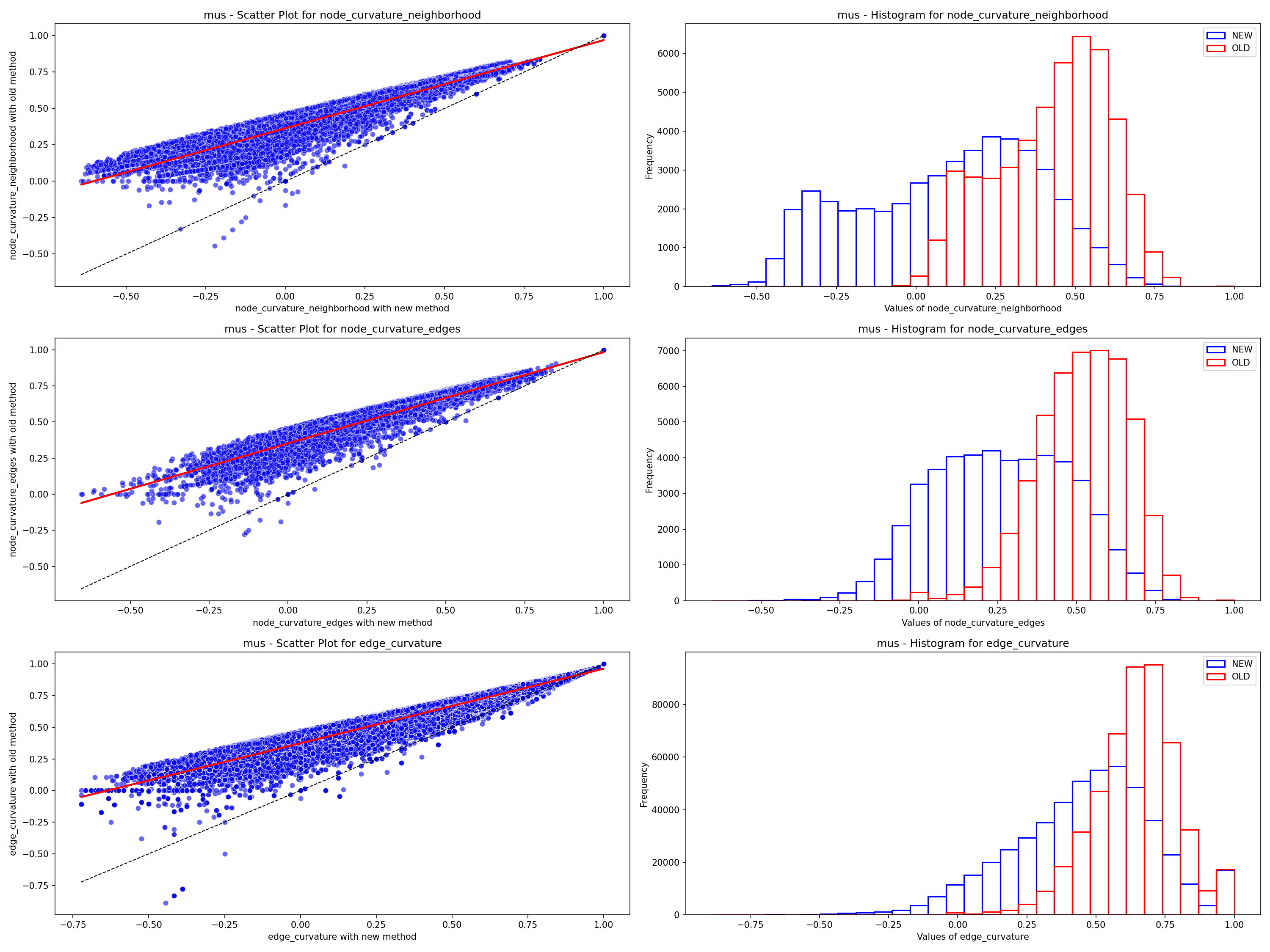}
        \caption{mus}
        \label{fig:graph3}
    \end{subfigure}
    \hfill % optional: add horizontal space between the figures
    \begin{subfigure}[b]{0.45\textwidth}
        \centering
        \includegraphics[width=\textwidth]{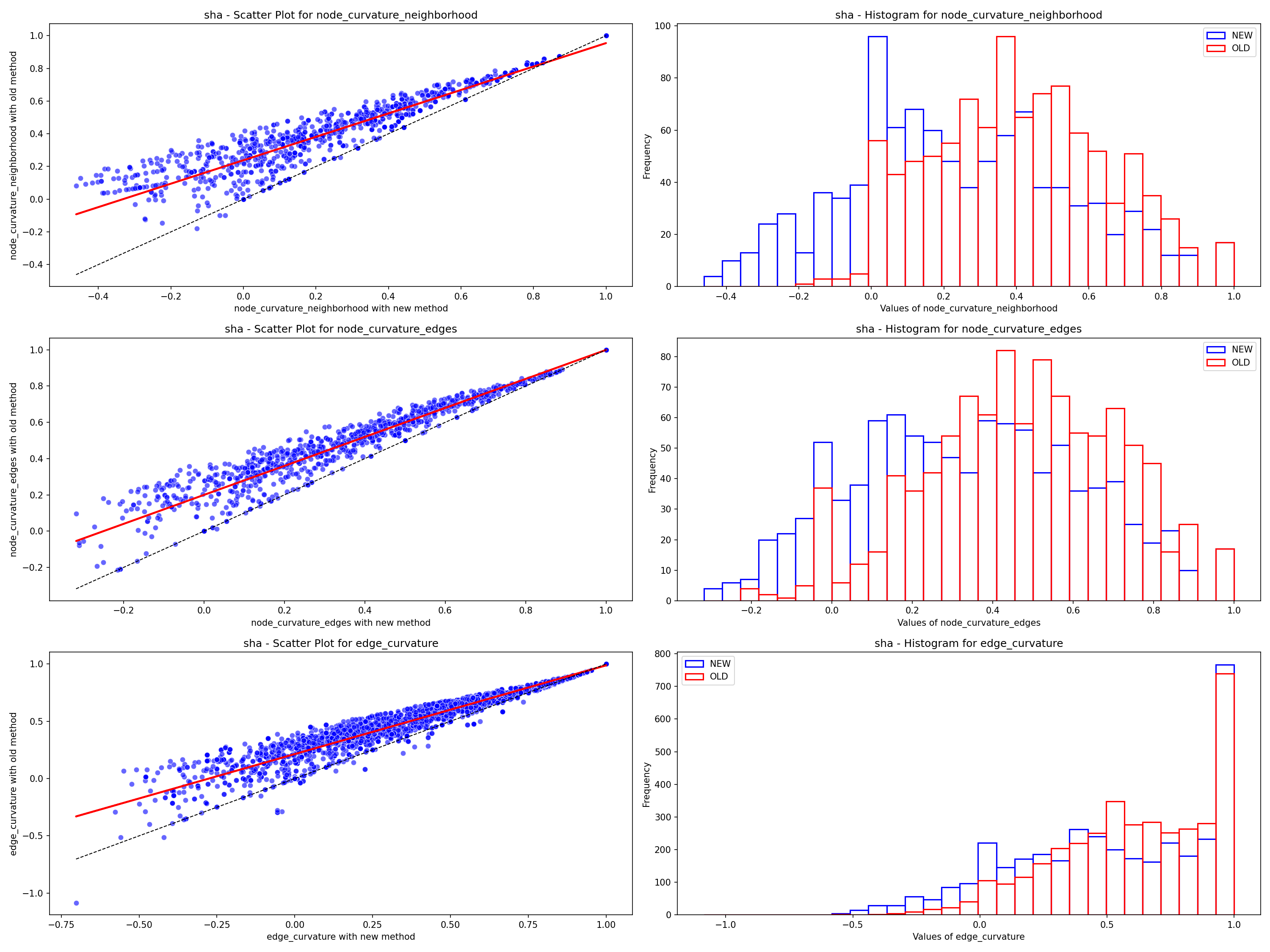}
        \caption{sha}
        \label{fig:graph4}
    \end{subfigure}

    \begin{subfigure}[b]{0.45\textwidth}
        \centering
        \includegraphics[width=\textwidth]{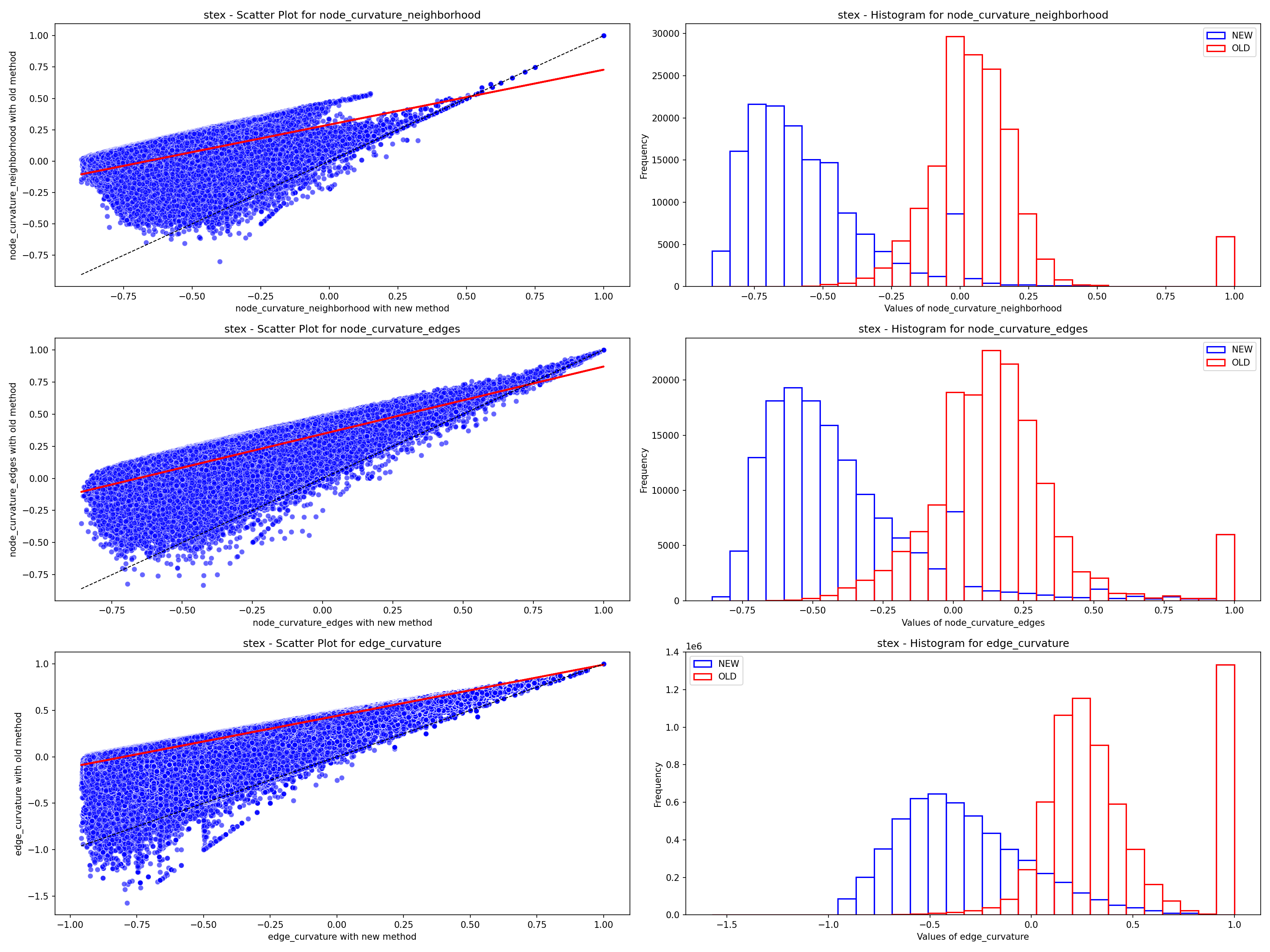}
        \caption{stex}
        \label{fig:graph3}
    \end{subfigure}
    \hfill % optional: add horizontal space between the figures
    \begin{subfigure}[b]{0.45\textwidth}
        \centering
        \includegraphics[width=\textwidth]{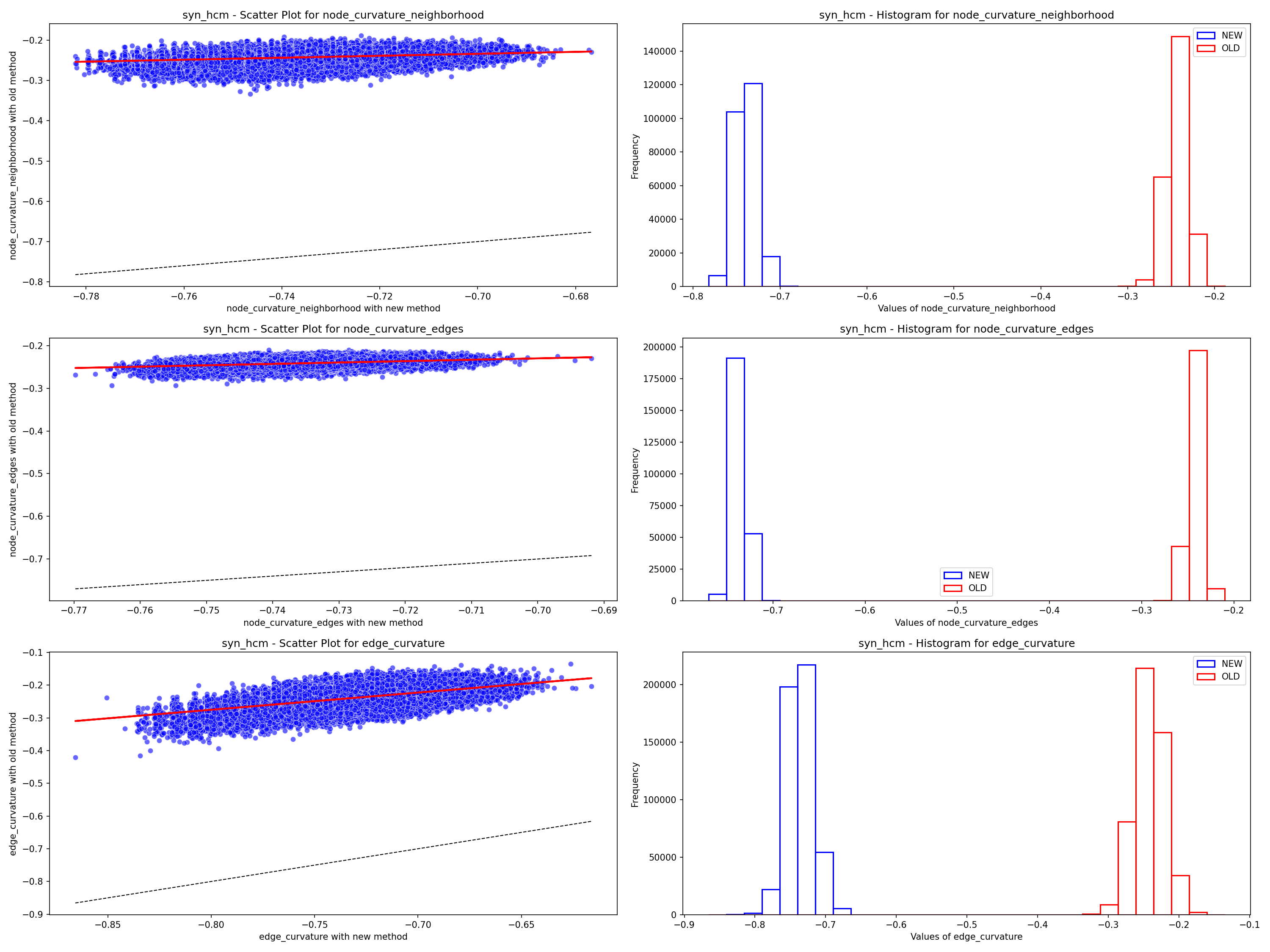}
        \caption{syn\_hcm}
        \label{fig:graph4}
    \end{subfigure}
    
    \begin{subfigure}[b]{0.45\textwidth}
        \centering
        \includegraphics[width=\textwidth]{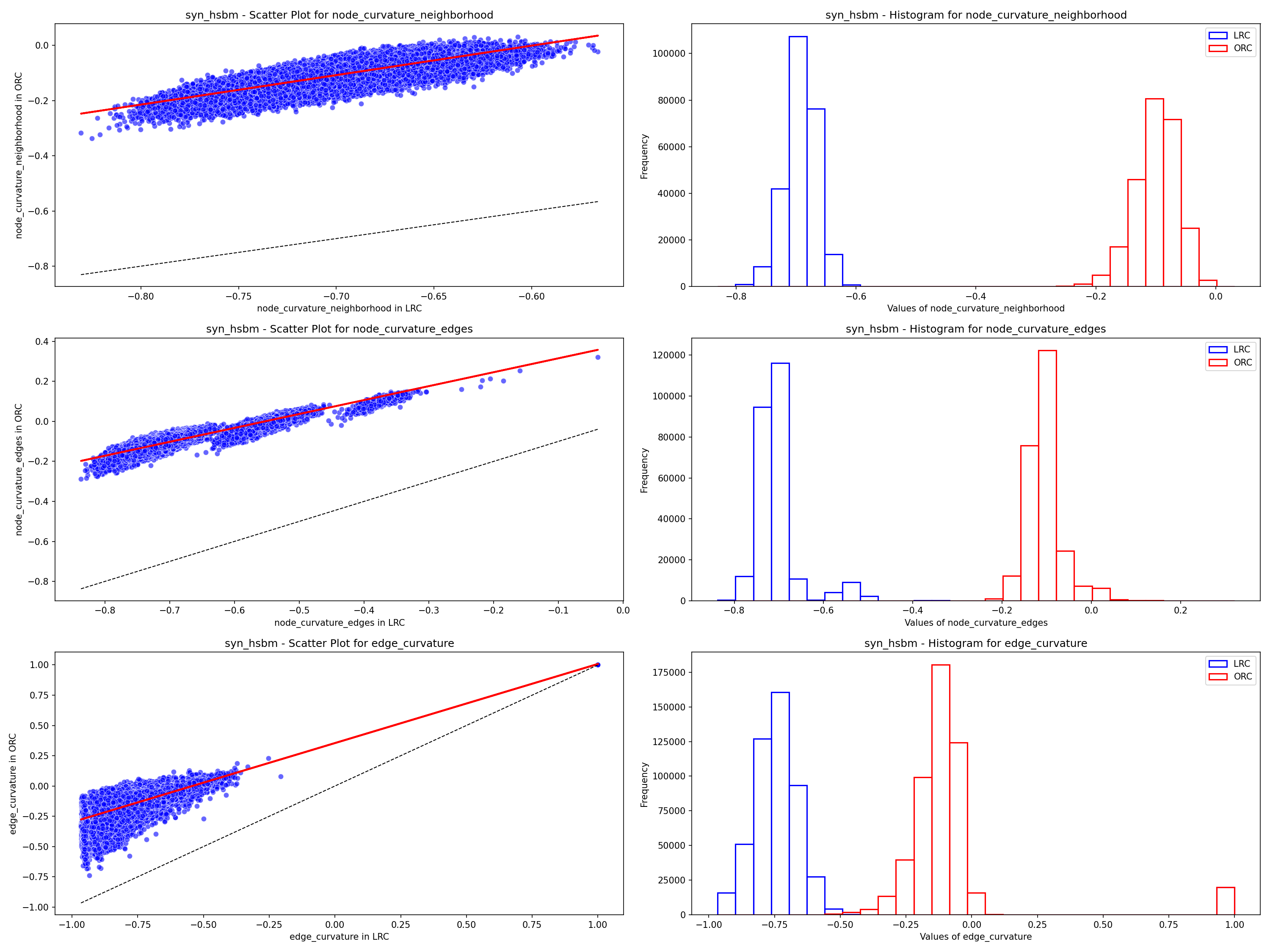}
        \caption{syn\_hsbm}
        \label{fig:graph3}
    \end{subfigure}
    \hfill % optional: add horizontal space between the figures
    \begin{subfigure}[b]{0.45\textwidth}
        \centering
        \includegraphics[width=\textwidth]{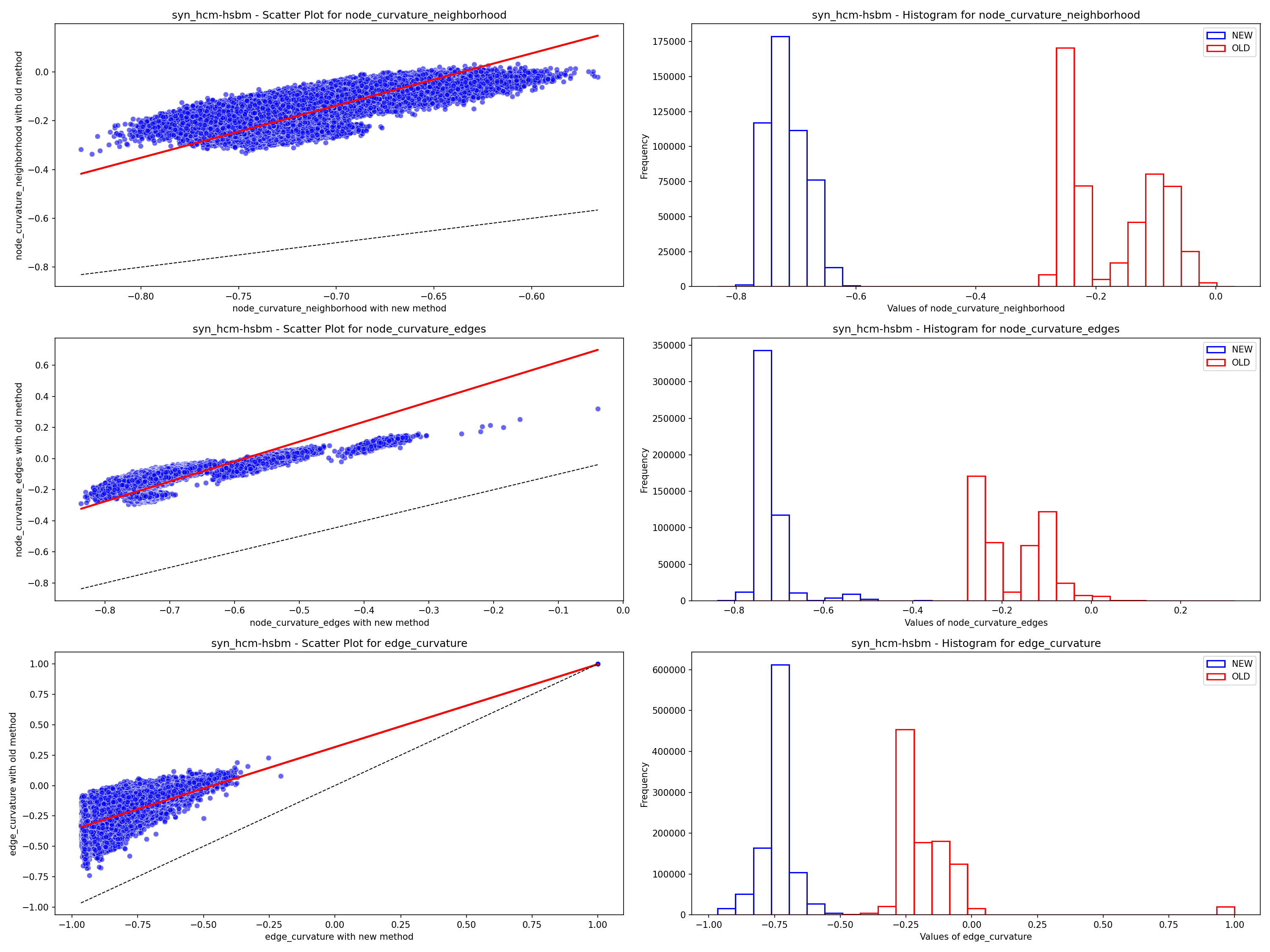}
        \caption{syn\_hsm-hsbm}
        \label{fig:graph4}
    \end{subfigure}
    
    \caption{Evaluations of curvatures}
    \label{fig:multiple-graphs}
\end{figure}

This observation is significant, as it confirms that while our curvatures are systematically lower, they remain functionally comparable for tasks requiring the identification of structural characteristics within data, such as detecting tightly-knit communities or clusters.

\section{Limitations and Conclusions}\label{section:conclusion}
While our study presents promising results, it is crucial to acknowledge certain limitations that surfaced during our analysis. Particularly, we observed instances where the translation of data was not uniform across all datasets, leading to significant discrepancies in certain cases. Despite these variations, our novel methodology continued to demonstrate effectiveness, particularly in discerning the threshold of curvature—a critical aspect in the realm of community detection. By maintaining its efficacy in this fundamental aspect, our approach reaffirms its relevance and applicability in addressing key challenges within network analysis.

Looking ahead, our future research endeavors will focus on further refining and expanding the scope of our theory. First, in the proof of the main theorem, we only considered the scenario where the curvature decreases with points shared by two vertices. This corresponds to the geometric implication that the curvature decreases in a triangular configuration in a discrete graph. However, as discussed by Topping et al. \cite{topping2021understanding}, refining our estimation to include cases involving grids, squares, or pentagons would yield even better results. Furthermore, we discussed the theoretical implications of considering laziness in our work. Still, we aim to evaluate its performance under different scenarios, such as employing $\alpha$-lazy random walks, to gain deeper insights into its robustness and adaptability. Additionally, we intend to extend our theoretical framework to encompass directed graphs or spaces with metrics on real numbers, thus broadening the applicability of our methodology and paving the way for more comprehensive analyses in diverse network structures. Through these endeavors, we aim to not only address existing limitations but also advance the theoretical foundations of curvature evaluation, contributing to the ongoing development of network science.

\section*{Acknowledgements}
The authors wish to express their sincere gratitude to Chonghan Lee at Pennsylvania State University for his valuable advice and insights regarding the experimental procedures discussed in this paper.

\bibliography{bib}

\begin{thebibliography}{24}
\providecommand{\natexlab}[1]{#1}
\providecommand{\url}[1]{\texttt{#1}}
\expandafter\ifx\csname urlstyle\endcsname\relax
  \providecommand{\doi}[1]{doi: #1}\else
  \providecommand{\doi}{doi: \begingroup \urlstyle{rm}\Url}\fi

\bibitem[Asoodeh et~al.(2018)Asoodeh, Gao, and Evans]{Asoodeh2018}
S.~Asoodeh, T.~Gao, and J.~Evans.
\newblock Curvature of hypergraphs via multi-marginal optimal transport.
\newblock pages 1180--1185, 2018.
\newblock \doi{10.1109/CDC.2018.8619706}.

\bibitem[Banerjee(2021)]{Banerjee2021}
A.~Banerjee.
\newblock On the spectrum of hypergraphs.
\newblock \emph{Linear Algebra and its Applications}, 614:\penalty0 82--110,
  2021.
\newblock ISSN 0024-3795.
\newblock \doi{https://doi.org/10.1016/j.laa.2020.01.012}.
\newblock Special Issue ILAS 2019.

\bibitem[Bott et~al.(1999)Bott, Jaffe, Jerison, Lusztig, Singer, and
  Yau]{Evans1999}
R.~Bott, A.~Jaffe, D.~Jerison, G.~Lusztig, I.~Singer, and S.~T. Yau, editors.
\newblock \emph{Current developments in mathematics, 1997}.
\newblock International Press, Boston, MA, 1999.
\newblock ISBN 1-57146-078-0.
\newblock Papers from the conference held in Cambridge, MA, 1997.

\bibitem[Burgio et~al.(2020)Burgio, Matamalas, G{\'o}mez, and
  Arenas]{Burgio2020}
G.~Burgio, J.~T. Matamalas, S.~G{\'o}mez, and A.~Arenas.
\newblock Evolution of cooperation in the presence of higher-order
  interactions: From networks to hypergraphs.
\newblock \emph{Entropy}, 22\penalty0 (7), 2020.
\newblock ISSN 1099-4300.
\newblock \doi{10.3390/e22070744}.

\bibitem[Coupette et~al.(2023)Coupette, Dalleiger, and Rieck]{Coupette2023}
C.~Coupette, S.~Dalleiger, and B.~Rieck.
\newblock Ollivier-ricci curvature for hypergraphs: {A} unified framework.
\newblock In \emph{The Eleventh International Conference on Learning
  Representations, {ICLR} 2023, Kigali, Rwanda, May 1-5, 2023}. OpenReview.net,
  2023.

\bibitem[Coupette et~al.(2024)Coupette, Vreeken, and Rieck]{coupette2024all}
C.~Coupette, J.~Vreeken, and B.~Rieck.
\newblock All the world’sa (hyper) graph: A data drama.
\newblock \emph{Digital Scholarship in the Humanities}, 39\penalty0
  (1):\penalty0 74--96, 2024.

\bibitem[Eidi and Jost(2020)]{eidi2020ollivier}
M.~Eidi and J.~Jost.
\newblock Ollivier ricci curvature of directed hypergraphs.
\newblock \emph{Scientific Reports}, 10\penalty0 (1):\penalty0 12466, 2020.

\bibitem[Forman(2003)]{Forman2003}
R.~Forman.
\newblock Bochner's method for cell complexes and combinatorial {R}icci
  curvature.
\newblock \emph{Discrete \& Computational Geometry. An International Journal of
  Mathematics and Computer Science}, 29\penalty0 (3):\penalty0 323--374, 2003.
\newblock ISSN 0179-5376.

\bibitem[Gosztolai and Arnaudon(2021)]{Arnaudon2021}
A.~Gosztolai and A.~Arnaudon.
\newblock Unfolding the multiscale structure of networks with dynamical
  ollivier-ricci curvature.
\newblock \emph{Nature Communications}, 12\penalty0 (1):\penalty0 4561, 2021.
\newblock \doi{10.1038/s41467-021-24884-1}.

\bibitem[Jost and Liu(2014)]{Jost2014}
J.~Jost and S.~Liu.
\newblock Ollivier's {R}icci curvature, local clustering and
  curvature-dimension inequalities on graphs.
\newblock \emph{Discrete \& Computational Geometry. An International Journal of
  Mathematics and Computer Science}, 51\penalty0 (2):\penalty0 300--322, 2014.
\newblock ISSN 0179-5376.

\bibitem[Kim et~al.(2024)Kim, Kang, Bu, Lee, Yoo, and Shin]{kim2024}
S.~Kim, S.~Kang, F.~Bu, S.~Y. Lee, J.~Yoo, and K.~Shin.
\newblock Hypeboy: Generative self-supervised representation learning on
  hypergraphs.
\newblock In \emph{The Twelfth International Conference on Learning
  Representations}. OpenReview.net, 2024.

\bibitem[Leal et~al.(2020)Leal, Eidi, and Jost]{Leal2020}
W.~Leal, M.~Eidi, and J.~Jost.
\newblock Ricci curvature of random and empirical directed hypernetworks.
\newblock \emph{Applied Network Science}, 5\penalty0 (1):\penalty0 65, 2020.
\newblock \doi{10.1007/s41109-020-00309-8}.
\newblock URL \url{https://doi.org/10.1007/s41109-020-00309-8}.

\bibitem[Lin et~al.(2011)Lin, Lu, and Yau]{Lin2011}
Y.~Lin, L.~Lu, and S.-T. Yau.
\newblock Ricci curvature of graphs.
\newblock \emph{The Tohoku Mathematical Journal. Second Series}, 63\penalty0
  (4):\penalty0 605--627, 2011.
\newblock ISSN 0040-8735.

\bibitem[Lung et~al.(2018)Lung, Gaskó, and Suciu]{Lung2018}
R.~Lung, N.~Gaskó, and M.~Suciu.
\newblock A hypergraph model for representing scientific output.
\newblock \emph{Scientometrics}, 117, 09 2018.
\newblock \doi{10.1007/s11192-018-2908-2}.

\bibitem[Ni et~al.(2015)Ni, Lin, Gao, Gu, and Saucan]{ni2015ricci}
C.-C. Ni, Y.-Y. Lin, J.~Gao, X.~D. Gu, and E.~Saucan.
\newblock Ricci curvature of the internet topology.
\newblock In \emph{2015 IEEE conference on computer communications (INFOCOM)},
  pages 2758--2766. IEEE, 2015.

\bibitem[Ollivier(2007)]{Ollivier2007}
Y.~Ollivier.
\newblock Ricci curvature of metric spaces.
\newblock \emph{Comptes Rendus Math\'{e}matique. Acad\'{e}mie des Sciences.
  Paris}, 345\penalty0 (11):\penalty0 643--646, 2007.
\newblock ISSN 1631-073X.

\bibitem[Ollivier(2009)]{Ollivier2009}
Y.~Ollivier.
\newblock Ricci curvature of {M}arkov chains on metric spaces.
\newblock \emph{Journal of Functional Analysis}, 256\penalty0 (3):\penalty0
  810--864, 2009.
\newblock ISSN 0022-1236.

\bibitem[Ricci-Curbastro(1903-1904)]{Ricci1903}
G.~Ricci-Curbastro.
\newblock Direzioni e invarianti principali in una varietà qualunque.
\newblock \emph{Atti del Reale Istituto veneto di scienze, lettere ed arti},
  63\penalty0 (2):\penalty0 1233–1239, 1903-1904.

\bibitem[Samal et~al.(2018)Samal, Sreejith, Gu, Liu, Saucan, and
  Jost]{Samal2018}
A.~Samal, R.~P. Sreejith, J.~Gu, S.~Liu, E.~Saucan, and J.~Jost.
\newblock Comparative analysis of two discretizations of ricci curvature for
  complex networks.
\newblock \emph{Scientific Reports}, 8\penalty0 (1):\penalty0 8650, 2018.
\newblock \doi{10.1038/s41598-018-27001-3}.

\bibitem[Sandhu et~al.(2015)Sandhu, Georgiou, Reznik, Zhu, Kolesov,
  Senbabaoglu, and Tannenbaum]{sandhu2015graph}
R.~Sandhu, T.~Georgiou, E.~Reznik, L.~Zhu, I.~Kolesov, Y.~Senbabaoglu, and
  A.~Tannenbaum.
\newblock Graph curvature for differentiating cancer networks.
\newblock \emph{Scientific reports}, 5\penalty0 (1):\penalty0 12323, 2015.

\bibitem[Topping et~al.(2021)Topping, Di~Giovanni, Chamberlain, Dong, and
  Bronstein]{topping2021understanding}
J.~Topping, F.~Di~Giovanni, B.~P. Chamberlain, X.~Dong, and M.~M. Bronstein.
\newblock Understanding over-squashing and bottlenecks on graphs via curvature.
\newblock \emph{arXiv preprint arXiv:2111.14522}, 2021.

\bibitem[Villani(2003)]{Villani2003}
C.~Villani.
\newblock \emph{Topics in optimal transportation}, volume~58 of \emph{Graduate
  Studies in Mathematics}.
\newblock American Mathematical Society, Providence, RI, 2003.
\newblock ISBN 0-8218-3312-X.
\newblock \doi{10.1090/gsm/058}.

\bibitem[Villani(2009)]{Villani2009}
C.~Villani.
\newblock \emph{Optimal transport}, volume 338 of \emph{Grundlehren der
  mathematischen Wissenschaften [Fundamental Principles of Mathematical
  Sciences]}.
\newblock Springer-Verlag, Berlin, 2009.
\newblock ISBN 978-3-540-71049-3.
\newblock \doi{10.1007/978-3-540-71050-9}.
\newblock Old and new.

\bibitem[Wang et~al.(2020)Wang, Tang, Xia, Gong, Chen, and Liu]{Wang2020}
W.~Wang, T.~Tang, F.~Xia, Z.~Gong, Z.~Chen, and H.~Liu.
\newblock Collaborative filtering with network representation learning for
  citation recommendation.
\newblock \emph{IEEE Transactions on Big Data}, PP:\penalty0 1--1, 10 2020.
\newblock \doi{10.1109/TBDATA.2020.3034976}.

\end{thebibliography}

\appendix
\section{Proofs} \label{section:proof}
\subsection{Proofs of Theorems in Section \ref{section:LRC}}

\general*
\begin{remark*}
The proof follows the same logic as the proof of Theorem 3 in \cite{Jost2014}, so the reader might find this proof very similar, with differences in some settings. The point of this proof is to show how a similar proof works in our setting.
\end{remark*}
\begin{proof}
In principle, our transfer plan moving $\mu_x$ to $\mu_y$ should be as follows. First, we move the mass of $\mu_x(y)$ from $y$ to own neighbors of $y$; second, we move a mass of $\mu_y(x)$ from own neighbors of $x$ to $x$; finally, fill gaps using the mass at own neighbors of $x$ or common neighbors. Filling the gaps at common neighbors costs $2$ and the ones at own neighbors of $y$ costs 3.

The first step can be realized, i.e. the total mass of own neighbors of $y$ is at least a mass of $\mu_x(y)$, when
\[1-\mu_y(x)-\sum_{z,z\sim x, z\sim y}\mu_x(z)\vee\mu_y(z)\geq \mu_x(y)\]
or
\[A:=1-\mu_y(x)-\mu_x(y)-\sum_{z,z\sim x, z\sim y}\mu_x(z)\vee \mu_y(z)\geq 0.\]

The second step can be realized, i.e. the total mass of own neighbors of $x$ is at least $\mu_y(x)$, when
\[1-\mu_x(y)-\sum_{z,z\sim x, z\sim y}\mu_x(z)\wedge\mu_y(z)\geq \mu_y(x)\]
or
\[B:=1-\mu_y(x)-\mu_x(y)-\sum_{z,z\sim x, z\sim y}\mu_x(z)\wedge \mu_y(z)\geq 0.\]

Obviously, $A\leq B$. If $0\leq A \leq B$,
\begin{align*}
    W_1(\mu_x, \mu_y) &\leq \mu_x(y)\times 1+\mu_y(x)\times 1 +\sum_{z,z\sim x,z\sim y}\left(\mu_x(z)\vee\mu_y(z)-\mu_x(z)\wedge \mu_y(z)\right)\times 2\\
    &+\Bigg(1-\mu_x(y)-\mu_y(x)-\sum_{z,z\sim x,z\sim y}\left(\mu_x(z)\vee\mu_y(z)-\mu_x(z)\wedge \mu_y(z)\right)\\
    &\hspace{2cm} -\sum_{z,z\sim x,z\sim y}\mu_x(z)\wedge \mu_y(z)\Bigg)\times 3 \\
    &=3-2\mu_x(y)-2\mu_y(x)-\sum_{z,z\sim x,z\sim y}\mu_x(z)\vee \mu_y(z)-2\sum_{z,z\sim x,z\sim y}\mu_x(z)\wedge \mu_y(z) \\
    &=1 + \left(\left(1-\mu_x(y)-\mu_y(x)-\sum_{z,z\sim x,z\sim y}\mu_x(z)\vee\mu_y(z)\right)_{+}\right.\\
    &\left.+\left(1-\mu_x(y)-\mu_y(x)-\sum_{z,z\sim x,z\sim y}\mu_x(z)\wedge\mu_y(z)\right)_{+}-\sum_{z,z\sim x,z\sim y}\mu_x(z)\wedge\mu_y(z)\right)
\end{align*}
which is the desired result. Indeed, the terms inside $(\cdot)_+$ operations are valid since $A$ and $B$ are non-negative. In conclusion, using the fact from \eqref{eq:orc} that $\kappa(x,y)=1-W_1(\mu_x,\mu_y)$,
\[\kappa(x,y)\geq -2+2\mu_x(y)+2\mu_y(x)+\sum_{z,z\sim x,z\sim y}\mu_x(z)\vee \mu_y(z)+2\sum_{z,z\sim x,z\sim y}\mu_x(z)\wedge \mu_y(z)\]
when $0 \leq A \leq B$.

If $A< 0 \leq B$, we do not need the part where it costs 3, because the mass $\mu_x(y)$ will fill all own neighbor of $y$. Hence,
\begin{align*}
    W_1(\mu_x, \mu_y) &\leq \mu_x(y)\times 1+\mu_y(x)\times 1 +\left(1-\mu_x(y)-\mu_y(x)-\sum_{z,z\sim x,z\sim y}\mu_x(z)\wedge\mu_y(z)\right)\times 2\\
    &=2-\mu_x(y)-\mu_y(x)-\sum_{z,z\sim x,z\sim y}\mu_x(z)\wedge\mu_y(z)\times 2\\
    &=1 + \left(\left(1-\mu_x(y)-\mu_y(x)-\sum_{z,z\sim x,z\sim y}\mu_x(z)\vee\mu_y(z)\right)_{+}\right.\\
    &\left.+\left(1-\mu_x(y)-\mu_y(x)-\sum_{z,z\sim x,z\sim y}\mu_x(z)\wedge\mu_y(z)\right)_{+}-\sum_{z,z\sim x,z\sim y}\mu_x(z)\wedge\mu_y(z)\right)
\end{align*}

In this case, $\left(1-\mu_x(y)-\mu_y(x)-\sum\limits_{z,z\sim x,z\sim y}\mu_x(z)\vee\mu_y(z)\right)_{+}$ is zero since $A$ is negative, so the last equality still holds.

This implies that the curvature $\kappa$ is, 
\[\kappa(x,y)\geq -1+\mu_x(y)+\mu_y(x)+2\sum_{z,z\sim x,z\sim y}\mu_x(z)\wedge\mu_y(z)\]
when $A < 0 \leq B$.

If $A\leq B<0$, this implies we do not even need the part where it costs 2, because moving $\mu_x(y)$ in $y$ and moving $\mu_y(x)$ from own neighbors of $x$ will be sufficient. Therefore,
\begin{align*}
    W_1(\mu_x,\mu_y)&\leq 1-\sum_{z,z\sim x,z\sim y}\mu_x(z)\wedge\mu_y(z)\\
    &=1 + \left(\left(1-\mu_x(y)-\mu_y(x)-\sum_{z,z\sim x,z\sim y}\mu_x(z)\vee\mu_y(z)\right)_{+}\right.\\
    &\left.+\left(1-\mu_x(y)-\mu_y(x)-\sum_{z,z\sim x,z\sim y}\mu_x(z)\wedge\mu_y(z)\right)_{+}-\sum_{z,z\sim x,z\sim y}\mu_x(z)\wedge\mu_y(z)\right)
\end{align*}
since $A$ and $B$ are negative, which means both terms in $(\cdot)_+$ operation is zero.

So the curvature $\kappa$ is
\[\kappa(x,y)\geq \sum_{z,z\sim x,z\sim y}\mu_x(z)\wedge\mu_y(z)\]
when $A \leq B < 0$.

For all cases, we have

\begin{align*}
    W_1(\mu_x,\mu_y)&\leq 1 + \left(\left(1-\mu_x(y)-\mu_y(x)-\sum_{z,z\sim x,z\sim y}\mu_x(z)\vee\mu_y(z)\right)_{+}\right.\\
    &\left.+\left(1-\mu_x(y)-\mu_y(x)-\sum_{z,z\sim x,z\sim y}\mu_x(z)\wedge\mu_y(z)\right)_{+}-\sum_{z,z\sim x,z\sim y}\mu_x(z)\wedge\mu_y(z)\right)
\end{align*}
, and hence, 
\[
\begin{split}
\kappa(x,y)&\geq \left(-\left(1-\mu_x(y)-\mu_y(x)-\sum_{z,z\sim x,z\sim y}\mu_x(z)\vee\mu_y(z)\right)_{+}\right.\\
    &\left.-\left(1-\mu_x(y)-\mu_y(x)-\sum_{z,z\sim x,z\sim y}\mu_x(z)\wedge\mu_y(z)\right)_{+}+\sum_{z,z\sim x,z\sim y}\mu_x(z)\wedge\mu_y(z)\right).    
\end{split}
\]
\end{proof}

\laziness*
\begin{proof}
    Let us denote $\gamma$ as an optimal transport plan that realizes the infimum for $W_1(\mu_x, \mu_y)$. Let $U$ be a random variable from the uniform distribution between $(0, 1)$. Let $\gamma^\alpha$ be a transport plan between $\mu_x^\alpha$ and $\mu_y^\alpha$ such that
    \begin{align}
       \gamma^\alpha(x, y) = 
        \begin{cases}
            (x, y) & \text{if } U < \alpha \\
            \gamma(x, y) & \text{if } U \geq \alpha.
        \end{cases}
    \end{align}
    It is easy to see that $\gamma^\alpha$ is a valid transport plan between $\mu_x^\alpha$ and $\mu_y^\alpha$ since each marginal of $\gamma^\alpha$ is $\mu_x^\alpha$ and $\mu_y^\alpha$, respectively. We can calculate the mass transferred under the plan $\gamma^\alpha$ by \eqref{eq:wcalc2}.
    With probability $\alpha$, $x$ and $y$ keep their positions so the distance $d(x', y') = d(x, y) = 1$. On the other hand, with probability $1-\alpha$, $x$ and $y$ moves according to the plan $\gamma$ so the expected distance will be $W_1(\mu_x, \mu_y)$. Hence, the expected mass transferred under the plan $\gamma^\alpha$ is $\alpha + (1-\alpha) W_1(\mu_x, \mu_y)$. For $W_1(\mu_x^\alpha, \mu_y^\alpha)$, we take the infimum over all possible transport plans, so $\alpha + (1-\alpha) W_1(\mu_x, \mu_y)$ can be an upper bound of $W_1(\mu_x^\alpha, \mu_y^\alpha)$.
\end{proof}

\lazinesscor*
\begin{proof}
    From \eqref{eq:orc}, we have
    \begin{align*}
        \kappa^\alpha(x, y) = 1 - W_1(\mu_x^\alpha, \mu_y^\alpha)
    \end{align*}
    and Theorem \ref{thm:alphacurv} gives us an upper bound of $W_1(\mu_x^\alpha, \mu_y^\alpha)$ as
    \begin{align*}
        W_1(\mu_x^{\alpha}, \mu_y^{\alpha}) \leq (1 - \alpha) W_1(\mu_x, \mu_y) + \alpha.
    \end{align*}
    Combining these two gives us 
    \begin{align*}
    \begin{split}
        \kappa^\alpha(x, y) = 1 - W_1(\mu_x^\alpha, \mu_y^\alpha) & \geq  1 - \left((1 - \alpha) W_1(\mu_x, \mu_y) + \alpha\right) \\
        & = (1-\alpha)(1-W_1(\mu_x, \mu_y)) = (1-\alpha)\kappa(x, y).
    \end{split}
    \end{align*}
\end{proof}

\main*
\begin{proof}
    Define $\alpha = \mu_x(x)$. $\alpha$ can be considered as laziness for $\mu_x$ and $\mu_y$, so we can obtain new probability measures $\nu_x$ and $\nu_y$ from $\mu_x$ and $\mu_y$ after removing the laziness and renormalizing. In other words,
    \begin{align*}
        \nu_x = \frac{1}{1-\alpha} \left(\mu_x - \alpha \delta_x \right), \quad
        \nu_y = \frac{1}{1-\alpha} \left(\mu_y - \alpha \delta_y \right).
    \end{align*}
    We can interpret that $\mu_x$ and $\mu_y$ are lazy versions of $\nu_x$ and $\nu_y$, respectively. Hence, we obtain 
    \begin{align}\label{eq:upperboundW}
        W_1(\mu_x, \mu_y) \leq (1-\alpha)W_1(\nu_x, \nu_y) + \alpha
    \end{align} from Theorem \ref{thm:alphacurv}. It remains to get $W_1(\nu_x, \nu_y)$. We obtain an upper bound of $W_1(\nu_x, \nu_y)$ by applying Theorem \ref{thm:general} to $\nu_x$ and $\nu_y$.
    \begin{align*}
    \begin{split}
        W_1&(\nu_x, \nu_y) \leq \left(1 + \left(1-\nu_x(y)-\nu_y(x)-\sum_{z,z\sim x,z\sim y}\nu_x(z)\vee\nu_y(z)\right)_{+}\right.\\
    &\left.+\left(1-\nu_x(y)-\nu_y(x)-\sum_{z,z\sim x,z\sim y}\nu_x(z)\wedge\nu_y(z)\right)_{+}-\sum_{z,z\sim x,z\sim y}\nu_x(z)\wedge\nu_y(z)\right).
    \end{split}
    \end{align*}
    By applying this upper bound to \eqref{eq:upperboundW}, we obtain the desired upper bound of $W_1(\mu_x, \mu_y)$. Combining this upper bound with \eqref{eq:orc}, we obtain the desired lower bound of $\kappa(x, y)$.
\end{proof}

\subsection{Additional proofs}
\begin{theorem}
    In a metric space $(X, d)$ with metrics $d(\cdot, \cdot)$ on $\Z$ equipped with local probability measures $\mu_x(\cdot)$ for each point $x \in X$, for any adjacent pair of elements $x, y \in X$,
    \[W_1(\mu_x,\mu_y) \leq 1 + 2\left(1-\mu_x(y)-\mu_y(x)\right)_{+}\]
    Hence, we get
    \[\kappa(x,y)\geq -2(1-\mu_x(y)-\mu_y(x))_{+}\]
\end{theorem}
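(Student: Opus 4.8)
The plan is to obtain this statement directly as a consequence of Theorem~\ref{thm:general}, since it is precisely the coarser bound one gets by discarding every contribution coming from common neighbors of $x$ and $y$. Write $C := 1 - \mu_x(y) - \mu_y(x)$ for the quantity that appears, undisturbed, on the right-hand side here, and recall that the two arguments of the $(\cdot)_+$ operations in Theorem~\ref{thm:general} are
\[
A := C - \sum_{z,z\sim x,z\sim y}\mu_x(z)\vee\mu_y(z), \qquad B := C - \sum_{z,z\sim x,z\sim y}\mu_x(z)\wedge\mu_y(z).
\]

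First I would observe that each of the common-neighbor sums $\sum_{z}\mu_x(z)\vee\mu_y(z)$ and $\sum_{z}\mu_x(z)\wedge\mu_y(z)$ is a sum of non-negative terms, hence non-negative. Consequently $A \le C$ and $B \le C$, and since $t \mapsto (t)_+$ is monotone non-decreasing we get $(A)_+ \le (C)_+$ and $(B)_+ \le (C)_+$. Next I would note that the trailing term $-\sum_{z}\mu_x(z)\wedge\mu_y(z)$ in Theorem~\ref{thm:general} is non-positive, so deleting it can only enlarge the right-hand side. Chaining these three observations through the bound of Theorem~\ref{thm:general} yields
\[
W_1(\mu_x,\mu_y) \le 1 + (A)_+ + (B)_+ - \sum_{z,z\sim x,z\sim y}\mu_x(z)\wedge\mu_y(z) \le 1 + 2\,(C)_+,
\]
which is exactly the claimed Wasserstein bound. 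The curvature inequality then follows immediately by substituting into $\kappa(x,y) = 1 - W_1(\mu_x,\mu_y)$ from \eqref{eq:orc}, giving $\kappa(x,y) \ge -2(1-\mu_x(y)-\mu_y(x))_+$.

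There is no genuine obstacle here; the only points demanding care are the monotonicity of the positive-part function and the sign of the discarded term, both of which are routine. If instead a self-contained argument were preferred over invoking Theorem~\ref{thm:general}, I would reconstruct the transport plan of that theorem while treating all neighbors as ``own'' neighbors---moving $\mu_x(y)$ off of $y$ and supplying $\mu_y(x)$ into $x$, then filling the remaining gap $(C)_+$ at the higher per-unit cost---which reproduces the same case split ($A, B$ positive or negative) but without the savings recorded by the common-neighbor sums. This alternative is strictly more work and offers no additional generality, so deriving the result as a corollary is the cleaner route.
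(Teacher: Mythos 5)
Your proof is correct, but it takes a genuinely different route from the paper's. You derive the statement as a corollary of Theorem~\ref{thm:general}: writing $C=1-\mu_x(y)-\mu_y(x)$, the common-neighbor sums are non-negative, so by monotonicity of $t\mapsto(t)_+$ you get $(A)_+\le (C)_+$ and $(B)_+\le (C)_+$, and the trailing term $-\sum_{z,z\sim x,z\sim y}\mu_x(z)\wedge\mu_y(z)$ is non-positive, so the bound collapses to $1+2(C)_+$; this is sound and non-circular (the paper never uses this statement in proving Theorem~\ref{thm:general}), and the curvature claim follows from $\kappa(x,y)=1-W_1(\mu_x,\mu_y)$ for adjacent pairs exactly as you say. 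The paper instead proves the statement from scratch on the dual side: it invokes Kantorovich duality (Proposition~\ref{Prop:Kant}), splits the supremum over $1$-Lipschitz functions $f$ into contributions from $x$, $y$, and their remaining neighbors, and bounds the supremum by $(1-\mu_x(y))+(1-\mu_y(x))+|1-\mu_x(y)-\mu_y(x)|$, which equals $1+2(1-\mu_x(y)-\mu_y(x))_+$. What your derivation buys is brevity and an explicit display of the logical relationship: this bound is precisely Theorem~\ref{thm:general} with every common-neighbor saving discarded, so it can never be sharper. What the paper's route buys is independence: it is self-contained, does not rest on the transport-plan construction behind Theorem~\ref{thm:general}, and so the two results corroborate each other from the primal (explicit coupling) and dual (Lipschitz test functions) sides of optimal transport. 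Your fallback sketch---re-running the transport-plan argument while treating all neighbors as ``own'' neighbors---would also work, but as you note it is strictly more work than either route.
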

\begin{proof}
Using Kantorovich duality in Proposition \ref{Prop:Kant}, we get the following.
    \begin{align*}
        W_1(\mu_x, \mu_y) &= \sup_{f, 1-{Lip}} \left(\sum_{z \sim x} f(z) \mu_x(z) - \sum_{z' \sim y} f(z') \mu_y(z') \right) \\
        &= \sup_{f, 1-{Lip}} \Bigg(  \sum_{z \sim x, z \neq y} \left(f(z) \mu_x(z) - f(x) \mu_x(x) \right) + (1 - \mu_x(y)) f(x) + f(y) \mu_x(y) \\
        &\quad - \sum_{z' \sim y, z' \neq x} \left( f(z') \mu_y(z') - f(y) \mu_y(z') \right) - (1 - \mu_y(x)) f(y) - f(x) \mu_y(x) \Bigg) \\
        &\leq (1 - \mu_x(y))+(1-\mu_y(x))+|1-\mu_x(y)-\mu_y(x)|\\
        &=2-\mu_x(y)-\mu_y(x)+|1-\mu_x(y)-\mu_y(x)|\\
        &1+2(1-\mu_x(y)-\mu_y(x))_{+}.
    \end{align*}
    Therefore,
    \[\kappa(x,y)\geq -2(1-\mu_x(y)-\mu_y(x))_{+}.\]
\end{proof}

\subsection{Wasserstein distance and Lower bounds on ORC on graphs}
\begin{theorem}
    Let a locally finite graph be $G = (V,E)$. If $x$ and $y$ in $V$ are neighboring vertices, we have the following.
\begin{align*}
    W_1(\mu_x, &\mu_y) \leq 1 + \left(\left(1-\mu_x(y)-\mu_y(x)-\sum_{z,z\sim x,z\sim y}\mu_x(z)\vee\mu_y(z)\right)_{+}\right.\\
    &\left.+\left(1-\mu_x(y)-\mu_y(x)-\sum_{z,z\sim x,z\sim y}\mu_x(z)\wedge\mu_y(z)\right)_{+}-\sum_{z,z\sim x,z\sim y}\mu_x(z)\wedge\mu_y(z)\right).
\end{align*}
Hence, we get
\begin{align*}
    \kappa(&x, y) \geq \left(-\left(1-\mu_x(y)-\mu_y(x)-\sum_{z,z\sim x,z\sim y}\mu_x(z)\vee\mu_y(z)\right)_{+}\right.\\
    &\left.-\left(1-\mu_x(y)-\mu_y(x)-\sum_{z,z\sim x,z\sim y}\mu_x(z)\wedge\mu_y(z)\right)_{+}+\sum_{z,z\sim x,z\sim y}\mu_x(z)\wedge\mu_y(z)\right).
\end{align*}
\end{theorem}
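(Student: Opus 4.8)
The plan is to recognize that this statement is precisely the restriction of Theorem~\ref{thm:general} to the graph setting, so the cleanest route is to verify that graphs furnish an instance of the abstract hypotheses and then invoke that theorem verbatim. First I would check that a locally finite graph $G=(V,E)$ equipped with the shortest-path distance $d$ is a metric space whose metric takes values in $\Z$: the graph distance $d(u,v)$ counts the edges on a shortest path and is therefore a nonnegative integer, and it satisfies the metric axioms. Second, I would confirm that the vertex measures $\mu_x$ from \eqref{eq:measure_graphs} are valid local probability measures, since $\mu_x(z)=w_{xz}/d_x\ge 0$ and $\sum_{z\sim x}\mu_x(z)=1$ by the definition $d_x=\sum_{z\sim x}w_{xz}$. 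Third, since $x$ and $y$ are neighboring vertices we have $d(x,y)=1$, so the hypotheses of Theorem~\ref{thm:general} are met for the pair $x,y$, and the Wasserstein upper bound follows immediately. The curvature bound then drops out of the definition \eqref{eq:orc}: because $d(x,y)=1$ we have $\kappa(x,y)=1-W_1(\mu_x,\mu_y)$, so negating the Wasserstein upper bound and adding $1$ produces exactly the stated lower bound on $\kappa(x,y)$.

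Alternatively, one can reprove the inequality directly by exhibiting the same explicit transport plan that underlies Theorem~\ref{thm:general}, which is natural here since the graph case is the original setting of Jost and Liu. The plan is to move the mass $\mu_x(y)$ sitting on $y$ out to the private neighbors of $y$ (cost $1$), draw the mass $\mu_y(x)$ required at $x$ in from the private neighbors of $x$ (cost $1$), leave the shared overlap $\mu_x(z)\wedge\mu_y(z)$ on each common neighbor untouched (cost $0$), transport the discrepancy $\mu_x(z)\vee\mu_y(z)-\mu_x(z)\wedge\mu_y(z)$ at common neighbors (cost $2$), and finally fill whatever gap remains using mass relocated to private neighbors of $y$ (cost $3$). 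One then splits into three regimes according to the signs of
\begin{gather*}
A:=1-\mu_x(y)-\mu_y(x)-\sum_{z,z\sim x,z\sim y}\mu_x(z)\vee\mu_y(z),\\
B:=1-\mu_x(y)-\mu_y(x)-\sum_{z,z\sim x,z\sim y}\mu_x(z)\wedge\mu_y(z),
\end{gather*}
noting that $A\le B$, and checks that the resulting cost in each case collapses to the single expression with the $(\cdot)_+$ truncations.

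The main obstacle, should one pursue the direct argument rather than simply citing Theorem~\ref{thm:general}, is the bookkeeping that the proposed plan is feasible in every regime — that is, that the private neighbors of $y$ actually hold enough mass to absorb $\mu_x(y)$ when $A\ge 0$, that the private neighbors of $x$ can supply $\mu_y(x)$ when $B\ge 0$, and that the remaining gaps can be filled without violating the marginal constraints — together with verifying that the three piecewise cost computations unify into the single $(\cdot)_+$ formula. Since the specialization route avoids this entirely by reusing the already-established Theorem~\ref{thm:general}, I would present the reduction as the primary proof and relegate the transport-plan computation to a remark for the reader who wants the self-contained graph-theoretic argument.
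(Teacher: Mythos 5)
Your primary route is exactly the paper's proof: the paper disposes of this theorem in one line by applying Theorem~\ref{thm:general} to the graph setting, just as you do, and your preliminary checks (graph distance is $\Z$-valued, the measures $\mu_x$ are probability measures, adjacent vertices satisfy $d(x,y)=1$) merely spell out what the paper leaves implicit. Your alternative self-contained transport-plan argument is simply the paper's proof of Theorem~\ref{thm:general} itself, so relegating it to a remark, as you suggest, is the right call.
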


\begin{proof}
Apply Theorem~\ref{thm:general} with the measure on locally finite graph.
\end{proof}

So in weighted graph, we get the following result which is described in~\cite{Jost2014}.

\begin{theorem}[Theorem 6 in \cite{Jost2014}]\label{thm:weighted2}
    Let $G=(V,E)$ be a locally finite graph. For any pair of adjacent vertices $x, y$,
\begin{align*}
    W_1(\mu_x, \mu_y) &\leq  1 + \left(\left(1-\frac{w_{xy}}{d_x}-\frac{w_{xy}}{d_y}-\sum_{x_1,x_1\sim x,x_1\sim y}\frac{w_{x_1x}}{d_x}\vee\frac{w_{x_1y}}{d_y}\right)_{+}\right.\\
    &\left.+\left(1-\frac{w_{xy}}{d_x}-\frac{w_{xy}}{d_y}-\sum_{x_1,x_1\sim x,x_1\sim y}\frac{w_{x_1x}}{d_x}\wedge\frac{w_{x_1y}}{d_y}\right)_{+}-\sum_{x_1,x_1\sim x,x_1\sim y}\frac{w_{x_1x}}{d_x}\wedge\frac{w_{x_1y}}{d_y}\right).\\
\end{align*}
Hence, we get
\begin{align*}
    \kappa(x,y)\geq&\left(-\left(1-\frac{w_{xy}}{d_x}-\frac{w_{xy}}{d_y}-\sum_{x_1,x_1\sim x,x_1\sim y}\frac{w_{x_1x}}{d_x}\vee\frac{w_{x_1y}}{d_y}\right)_{+}\right.\\
    &\left.-\left(1-\frac{w_{xy}}{d_x}-\frac{w_{xy}}{d_y}-\sum_{x_1,x_1\sim x,x_1\sim y}\frac{w_{x_1x}}{d_x}\wedge\frac{w_{x_1y}}{d_y}\right)_{+}+\sum_{x_1,x_1\sim x,x_1\sim y}\frac{w_{x_1x}}{d_x}\wedge\frac{w_{x_1y}}{d_y}\right).
\end{align*}
\end{theorem}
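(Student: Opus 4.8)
The plan is to derive this statement as a direct instantiation of Theorem~\ref{thm:general}, since a weighted locally finite graph is a particular example of a space carrying an integer-valued metric together with local probability measures. The key observation enabling this is that, even in the weighted setting, the distance $d(\cdot,\cdot)$ is the combinatorial graph distance — the number of edges on a shortest path, as defined in Section~\ref{section:intmetrics} — rather than a sum of weights; the weights enter only through the measures. Consequently $(V, d)$ is genuinely a metric space with metric on $\Z$, and in particular $d(x,y) = 1$ for any pair of adjacent vertices. I would first record this, together with the fact that $\mu_x$ from \eqref{eq:measure_graphs} is a bona fide local probability measure: it is supported on the neighbors of $x$ and satisfies $\sum_{z \sim x}\mu_x(z) = \sum_{z \sim x} w_{xz}/d_x = d_x/d_x = 1$.

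With the hypotheses of Theorem~\ref{thm:general} verified, the remaining step is a substitution of the explicit weighted-graph quantities into the general bound. For adjacent $x, y$ we have $\mu_x(y) = w_{xy}/d_x$ and, using the undirected symmetry $w_{xy} = w_{yx}$, $\mu_y(x) = w_{xy}/d_y$. For each common neighbor $z = x_1$ with $x_1 \sim x$ and $x_1 \sim y$, we have $\mu_x(x_1) = w_{x_1 x}/d_x$ and $\mu_y(x_1) = w_{x_1 y}/d_y$, so that $\mu_x(z) \vee \mu_y(z)$ becomes $\frac{w_{x_1 x}}{d_x} \vee \frac{w_{x_1 y}}{d_y}$ and $\mu_x(z) \wedge \mu_y(z)$ becomes $\frac{w_{x_1 x}}{d_x} \wedge \frac{w_{x_1 y}}{d_y}$. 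Performing these replacements termwise in the inequality of Theorem~\ref{thm:general} reproduces exactly the claimed upper bound on $W_1(\mu_x, \mu_y)$.

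The curvature statement then follows at once from the definition \eqref{eq:orc}: because $d(x,y) = 1$ for adjacent vertices, $\kappa(x,y) = 1 - W_1(\mu_x, \mu_y)$, and subtracting the upper bound on $W_1$ from $1$ cancels the leading $1$ and flips the sign of the bracketed expression, giving the stated lower bound on $\kappa(x,y)$.

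I do not anticipate a substantive obstacle, as the content is a specialization of an already-proved theorem rather than a fresh argument. The only point deserving genuine care is the verification that the weighted graph truly fits the abstract premises of Theorem~\ref{thm:general} — most notably that the ambient distance remains integer-valued (the edge-count metric) despite the edge weights, and that the support of $\mu_x$ is precisely the neighborhood of $x$ — so that the transport-plan construction of the general proof (moving $\mu_x(y)$ off $y$, pulling $\mu_y(x)$ onto $x$, and filling the residual gaps at common versus own neighbors at respective costs $2$ and $3$) transfers verbatim to the weighted case.
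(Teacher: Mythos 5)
Your proposal is correct and matches the paper's own route: the paper likewise obtains this statement by specializing Theorem~\ref{thm:general} to a locally finite graph equipped with the measure \eqref{eq:measure_graphs} (via an intermediate graph-version theorem whose proof is exactly that specialization), then reading off the weighted substitutions $\mu_x(y)=w_{xy}/d_x$, $\mu_y(x)=w_{xy}/d_y$, $\mu_x(x_1)=w_{x_1x}/d_x$, $\mu_y(x_1)=w_{x_1y}/d_y$. Your added care in verifying the hypotheses — integer (hop-count) metric despite weights, and that $\mu_x$ is a genuine probability measure supported on $\mathcal{N}(x)$ — is sound and, if anything, more explicit than the paper.
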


For $\alpha$-lazy version, we can derive the following.

\begin{theorem}\label{thm:alphaweighted2}
    Let $G=(V,E)$ be a locally finite graph. For any pair of adjacent vertices $x,y$,
\begin{align*}
    \kappa(x,y)\geq&(1-\alpha)\left(-\left(1-\frac{w_{xy}}{d_x}-\frac{w_{xy}}{d_y}-\sum_{x_1,x_1\sim x,x_1\sim y}\frac{w_{x_1x}}{d_x}\vee\frac{w_{x_1y}}{d_y}\right)_{+}\right.\\
    &\left.-\left(1-\frac{w_{xy}}{d_x}-\frac{w_{xy}}{d_y}-\sum_{x_1,x_1\sim x,x_1\sim y}\frac{w_{x_1x}}{d_x}\wedge\frac{w_{x_1y}}{d_y}\right)_{+}+\sum_{x_1,x_1\sim x,x_1\sim y}\frac{w_{x_1x}}{d_x}\wedge\frac{w_{x_1y}}{d_y}\right).
\end{align*}
\end{theorem}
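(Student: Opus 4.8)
The plan is to obtain this bound by composing two results already established in the excerpt, rather than re-running the transport-plan construction from scratch. First I would observe that the entire bracketed expression on the right-hand side—namely $-\bigl(\cdots\bigr)_+ - \bigl(\cdots\bigr)_+ + \sum_{x_1} \tfrac{w_{x_1 x}}{d_x}\wedge\tfrac{w_{x_1 y}}{d_y}$—is \emph{exactly} the lower bound for the non-lazy curvature $\kappa(x,y)$ supplied by Theorem~\ref{thm:weighted2}, which is itself the weighted-graph specialization of Theorem~\ref{thm:general}. So the real content of this statement is merely that passing to the $\alpha$-lazy random walk multiplies that same lower bound by the factor $(1-\alpha)$.

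Second, I would invoke Corollary~\ref{cor:alphakurv}. A locally finite graph equipped with its graph distance is a space with a metric on $\Z$, so the corollary applies verbatim and yields $\kappa^\alpha(x,y) \geq (1-\alpha)\,\kappa(x,y)$. Chaining this with the Theorem~\ref{thm:weighted2} bound produces
\[
\kappa^\alpha(x,y) \;\geq\; (1-\alpha)\,\kappa(x,y) \;\geq\; (1-\alpha)\,\Bigl[\,\text{bracketed expression}\,\Bigr],
\]
where the second inequality is legitimate precisely because $1-\alpha>0$ preserves the direction of the inequality. This is exactly the claimed bound, so no new computation is required.

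There is no genuine obstacle here; the only thing to handle carefully is the sign bookkeeping in that final multiplication. The bracketed lower bound is typically negative, and scaling a negative quantity by $(1-\alpha)\in(0,1)$ makes it \emph{larger} (less negative)—which is exactly the geometric meaning of laziness raising curvature, and is consistent with the direction of Corollary~\ref{cor:alphakurv}. I would also flag explicitly that the $\kappa$ written on the left of the statement should be read as the lazy curvature $\kappa^\alpha$ on $(X,d,\mu^\alpha)$, to match the section heading; under that reading the argument is a one-line composition of Theorem~\ref{thm:weighted2} and Corollary~\ref{cor:alphakurv}, with the transport-plan work already having been done in the proof of Theorem~\ref{thm:general}.
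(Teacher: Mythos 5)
Your proposal is correct and matches the paper's (implicit) derivation exactly: the paper states this theorem as an immediate consequence of Theorem~\ref{thm:weighted2} combined with the laziness bound of Corollary~\ref{cor:alphakurv}, which is precisely your one-line composition $\kappa^\alpha(x,y) \geq (1-\alpha)\kappa(x,y) \geq (1-\alpha)\bigl[\text{bracketed expression}\bigr]$. Your clarification that the left-hand $\kappa$ must be read as the lazy curvature $\kappa^\alpha$ is also apt, since the paper's statement leaves this implicit.
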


\section{Curvatures in Hypergraphs} \label{section:hypergraphs}

As mentioned in Section \ref{section:curv_hyper}, Coupette et al. \cite{Coupette2023} proposed a framework for describing curvature in hypergraphs. This framework is divided into three main parts: first, defining the local measures from a given hypergraph; second, introducing the aggregate function, a new function that plays the role of the Wasserstein distance in hypergraphs; and third, defining multiple curvatures, not just for a single edge.

First, let us discuss the local measures. There are various ways to define the local measures. The following three methods are representative examples. First, the Equal-Nodes Random Walk approach assigns equal probability mass to the neighborhood $\mathcal{N}(x)$ of a vertex $x$, regardless of the complexity of the edges containing $x$. Thus, for $x$ and $y$ with $x \sim y$, the probability of moving from $x$ to $y$ is
\[\mu_x^{EN}(y) = \frac{1}{|\mathcal{N}(x)|}.\]

Second, the Equal-Edges Random Walk method involves uniformly selecting an edge $e$ that includes a given vertex $x$ and then uniformly selecting a vertex $y$ in that chosen edge $e$. For adjacent $x$ and $y$, we can define the probability of moving $x$ to $y$ as
\[\mu_x^{EE}(y) = \frac{1}{\deg(x) - |\{e \ni x \mid |e| = 1\}|}\sum_{e \supseteq \{x, y\}} \frac{1}{|e| - 1}.\]

Third, the Weighted-Edges Random Walk approach selects an edge $e$ in proportion to its cardinality before selecting a new vertex $u$. After picking $e$, it chooses a vertex $u$. For adjacent $x$ and $y$, the probability of moving $x$ to $y$ is
\[\mu_x^{WE}(y) = \sum_{e \supseteq \{x, y\}} \frac{|e|-1}{\sum\limits_{f \ni x}(|f| - 1)}\frac{1}{|e| - 1}.\]

Next, we introduce the three aggregate functions mentioned earlier. First, the AGG$_{\text{A}}$ function calculates the average of the Wasserstein distances for all pairs of vertices within an edge.
\[\text{AGG}_{\text{A}}(e)=\frac{2}{|e|(|e|-1)}\sum_{\{i,j\}\subseteq e}W_1(\mu_i,\mu_j)\]

The AGG$_{\text{B}}$ function calculates the average Wasserstein distance between the local measures of all vertices within an edge and the barycentric measure of these local measures.
\[\text{AGG}_{\text{B}}(e)=\frac{1}{|e|-1}\sum_{i\in e}W_1(\mu_i,\Bar{\mu})\]
where $\Bar{\mu}$ denotes the barycenter of the probability measures of nodes contained in $e$, i.e., the distribution $\Bar{\mu}$ that minimizes the value of $\sum\limits_{i \in e} W_1(\mu_i,\Bar{\mu})$.

Finally, the AGG$_{\text{M}}$ function returns the maximum Wasserstein distance among all pairs of vertices within a given edge.
\[\text{AGG}_{\text{M}}(e)=\max\{W_1(\mu_i,\mu_j)|\{i,j\}\subseteq e\}\]

The AGG$_{\text{M}}$ function, which simply returns the maximum Wasserstein distance among all pairs of vertices within an edge, is less effective compared to the average (AGG$_{\text{A}}$) or barycenter (AGG$_{\text{B}}$) methods. The AGG$_{\text{M}}$ function does not provide as comprehensive a measure of the distribution of distances as the other methods.

Moreover, while AGG$_{\text{B}}$ has a clear mathematical interpretation through the barycentric measure, it incurs additional computational costs due to the need to determine the barycentric measure. Consequently, AGG$_{\text{A}}$, which calculates the average Wasserstein distance, is preferred for simulations due to its balance between computational efficiency and informative value.

Curvature can be discussed not only for edges but also for nodes \cite{Banerjee2021,Jost2014}. In hypergraphs, there are several ways to define curvatures for a vertex $i$. One is as the mean of all curvatures between a vertex $i$ and the neighborhood $\mathcal{N}(i)$ of $i$. We will note it as node curvature at $i$ with the neighborhood:
\[\kappa^{\mathcal{N}}(i)=\frac{1}{|\mathcal{N}(i)|}\sum_{j\in \mathcal{N}(i)}\kappa(i,j)\]

Secondly, it can be derived as the mean of all curvatures of edges containing a vertex $i$. In short, we will note this as node curvature at $i$ with edges:
\[\kappa^{E}(i)=\frac{1}{\deg(i)}\sum_{e\ni i}\kappa(e)\]

\section{Experiment Details} \label{section:exp_detail}
\subsection{Datasets}

For our experiments, we utilized a subset of the datasets released by Coupette et al. in their paper \cite{Coupette2023}. These datasets are publicly available\footnote{https://doi.org/10.5281/zenodo.7624573}, and we have directly used some of them in our study. Below is a brief description of each dataset we employed.

\subsubsection{NDC-AI, NDC-PC: Drugs Approved by the U.S. Food and Drug Administration}
The U.S. Food and Drug Administration (FDA) gathers data on all pharmaceuticals produced, prepared, propagated, compounded, or processed by registered drug establishments for commercial distribution within the United States. This information is maintained in the National Drug Code (NDC) Directory, which is updated daily and contains listed NDC numbers and all related drug information. From this CSV file, they derived two hypergraphs: ndc-ai, where nodes represent active ingredients used in these drugs, and ndc-pc, where nodes correspond to the pharmaceutical classes assigned to these drugs. In both hypergraphs, edges represent FDA-registered drugs.

\subsubsection{MUS: Music Pieces}

The music21 library is an open-source Python tool for computer-aided musicology that includes a corpus of public-domain music in symbolic notation. They used music21 to extract hypergraphs from this corpus. Each hypergraph represents a music piece, with edges corresponding to chords played for specific durations and nodes representing sound frequencies. Unlike other collections, the hypergraphs in the mus collection are node-aligned. Edges with cardinality 0, corresponding to pauses, were included in the cardinality decomposition but excluded from curvature computations. This dataset contains 1944 hypergraphs.

\subsubsection{STEX: StackExchange Sites}
StackExchange hosts Q\&A communities, where each question is tagged with at least one and at most five tags. The data covers associated tags and other metadata (including question titles and, for smaller sites, the question bodies). From this data, they created the stex hypergraph collection, where each hypergraph represents a StackExchange site, each edge represents a question, and each node represents a tag used at least once on that site. This dataset contains 355 hypergraphs.

\subsubsection{SHA: Shakespeare’s Plays}
The sha collection is a subset of the HYPERBARD dataset recently introduced by Coupette et al.\cite{coupette2024all}, based on the TEI-encoded XML files of Shakespeare’s plays provided by Folger Digital Texts. In this dataset, each hypergraph represents one of Shakespeare’s plays, categorized into three types: comedy, history, and tragedy. In each hypergraph, nodes correspond to named characters in the play, and edges represent groups of characters who are simultaneously present on stage. This dataset contains 37 hypergraphs.

\subsubsection{SYN\_HCM, SYN\_HSBM, SYN\_HCM-HSBM: Synthetic Hypergraphs}
To generate synthetic hypergraphs, they developed hypergraph generators that extend three well-known graph models to hypergraphs. For the syn\_hcm data, they adapted the configuration model, which, in the context of undirected graphs, is defined by a degree sequence. Our hypergraph configuration model is characterized by both a node degree sequence and an edge cardinality sequence. Similarly, for the syn\_hsbm data, they extended the stochastic block model, which, for undirected graphs, is defined by a vector of \(c\) community sizes and a \(c \times c\) affinity matrix that specifies affiliation probabilities between communities. Our hypergraph stochastic block model is defined by a vector of \(c_V\) node community sizes, a vector of \(c_E\) edge community sizes, and a \(c_V \times c_E\) affinity matrix that specifies the affiliation probabilities between node communities and edge communities.

They used each of our generators to create 250 hypergraphs for syn\_hcm and syn\_hsbm with identical node count \(n\), edge count \(m\), and density \(c = nm\), where \(c\) is the number of filled cells in the node-to-edge incidence matrix. Also, they generated a mixture of data, noted syn\_hcm-hsbm, with 500 hypergraphs based on each dataset.

\subsection{Experimental Setup}\label{subsection:exp_setup}
First, we will explain the experimental setting. Regarding the time evaluation discussed in Section \eqref{section:timeeval}, we performed the following steps for each dataset: 
\begin{enumerate}
    \item Constructed the local measure using the Weighted-Edges Random Walk method and estimated the AGG$_{\text{A}}$ function.
    \item Constructed the local measure using the Equal-Nodes Random Walk method and estimated the AGG$_{\text{A}}$ function.
\end{enumerate}

The time taken for these steps was measured, focusing specifically on the time required for computing the Wasserstein distance, as this is the primary difference between the traditional and our methods.

Regarding the curvature evaluations discussed in Section 2, for each dataset, we constructed the local measure using the Equal-Nodes Random Walk method and estimated the AGG$_{\text{A}}$ function. We then measured the edge curvature and the two types of node curvatures introduced in Appendix \ref{section:hypergraphs}.

Given our development focuses on the calculation of the Wasserstein distance, we only modified the function of calculating Wasserstein distance in the publicly available code\footnote{https://doi.org/10.5281/zenodo.7624573} by Coupette et al. \cite{Coupette2023} according to our algorithm in Algorithm \ref{alg:ricci_curvature}. In their code, the Sinkhorn algorithm provided by an open library was used to compute the Wasserstein distance. Parameters such as the number of iterations and a convergence radius set to 500 and 1/100, respectively. Based on the algorithm we proposed in Algorithm \ref{alg:ricci_curvature}, we calculated the Wasserstein distance, then measured the computation time and compared the resulting curvatures. 

Our experiments were performed on a computer with Intel(R) Core(TM) i5-9600 CPU @ 3.10GHz and 16.0GB RAM. Initially, we intended to run experiments on all datasets publicly available from Coupette et al. \cite{Coupette2023}. However, some large-scale datasets caused our computing worker to run out of RAM, resulting in segmentation fault errors, or the traditional numerical algorithm failing to converge. Therefore, we had to limit our experiments to datasets that were feasible to test within our environment. Although we could not perform tests on the entire dataset, our claims were validated on the remaining feasible data, and we considered this to be sufficient.
\end{document}